\newtheorem{theorem}{Theorem}
\begin{document}
\author{Javier Garcia$^{1}$, Michael Yannuzzi$^{1}$, Peter Kramer$^{2}$, Christian Rieck$^{2}$, and Aaron T. Becker$^{1,2}$}%
\title{\LARGE\bf Connected Reconfiguration of Polyominoes Amid Obstacles using RRT*
\thanks{
This work was supported by the Alexander von Humboldt Foundation and the National Science Foundation under  \href{http://nsf.gov/awardsearch/showAward?AWD_ID=1553063}{[IIS-1553063},
\href{https://nsf.gov/awardsearch/showAward?AWD_ID=1849303}{1849303},
\href{https://nsf.gov/awardsearch/showAward?AWD_ID=2130793}{2130793]}.
}
\thanks{
$^{1}$Department of Electrical and Computer Engineering,  University of Houston, Houston, TX USA\newline {\tt\small \{jgarciagonzalez,mcyannuzzi,atbecker\}@uh.edu}}
\thanks{
$^{2}$Department of Computer Science, TU Braunschweig, Braunschweig, Germany {\tt\small \{kramer,rieck\}@ibr.cs.tu-bs.de}}
}%
\maketitle
\begin{abstract}
This paper investigates the use of a sampling-based approach, the RRT*, to reconfigure a 2D set of connected tiles in complex environments, where multiple obstacles might be present.
Since the target application is automated building of discrete, cellular structures using mobile robots, there are constraints that determine what tiles can be picked up and where they can be dropped off during reconfiguration. 
We compare our approach to two algorithms as global and local planners, and show that we are able to find more efficient build sequences using a reasonable number of samples, in environments with varying densities of obstacles.
\par

\end{abstract}

\section{Introduction}\label{sec:Intro}


Cellular structures are related to reconfigurable robotics work, but rather than using intelligent, powered and actuated reconfigurable modules, small robots that walk along the modules are used to move them.
This allows the modules to be passive, which reduces their complexity, weight, and cost. 
Automated building of discrete, cellular structures has potential applications at many scales, ranging from plans for kilometer-scale manufacturing structures in space~\cite{jenett2017design}, to millimeter-scale smart material~\cite{thalamy2021engineering}, to nano-scale assembly with DNA~\cite{song2017reconfiguration}. 

Building with discrete, cellular structures provides some advantages when compared to methods that require an external scaffold or an external gantry such as traditional 3D-printing. 
The workspace of the gantry defines the size of the structure that can be built.
In contrast, the  cellular structures provide their own scaffold for construction, and modules can move along this structure to increase the build area.
 

The robot in Figs.~\ref{fig:LeadingPhoto} and~\ref{fig:StepReq} shows the motivating hardware system.
Automated building of discrete cellular structures was explored by Jenett et al.~\cite{jenett2019material}.
Their work featured cellular components called \emph{tiles} used as building material and \mbox{BILL-E}, a robot designed to reconfigure them. 
Tiles are discrete structures that can be assembled and disassembled by mating the magnets on two separate tiles' faces.
BILL-E is a mobile robotic platform based on the inchworm archetype. 
This six DoF robot can walk along the structure made of tiles, and can pick up, carry, and then place one tile at a time.
As with reconfigurable robotics, the reconfiguration speed can often be increased by using more robots at a time~\cite{fekete_et_al:LIPIcs.ISAAC.2021.9,spaceants2}, but this paper focuses on moving a single robot.

We consider the problem of reconfiguring a given start configuration into a given goal configuration of tiles using a single robot, moving one tile at a time while keeping all intermediate configurations connected, regardless of the presence of obstacles. Among other factors, power consumption motivates solving for the shortest set of moves to achieve this. Because the robot itself can only move on top of tiles (see Fig.~\ref{fig:StepReq}), the connectivity constraint is crucial to ensure that the robot can reach every tile of the shape at all times. Furthermore, we require the tiles to be connected to make sure that the relative positions of the tiles stay the same during the reconfiguration, which is important when reconfiguring in space or water, where they can easily float away once disconnected.

\begin{figure}[t]
\setlength{\abovecaptionskip}{0pt}
\centering
    \begin{overpic}[width=\columnwidth] {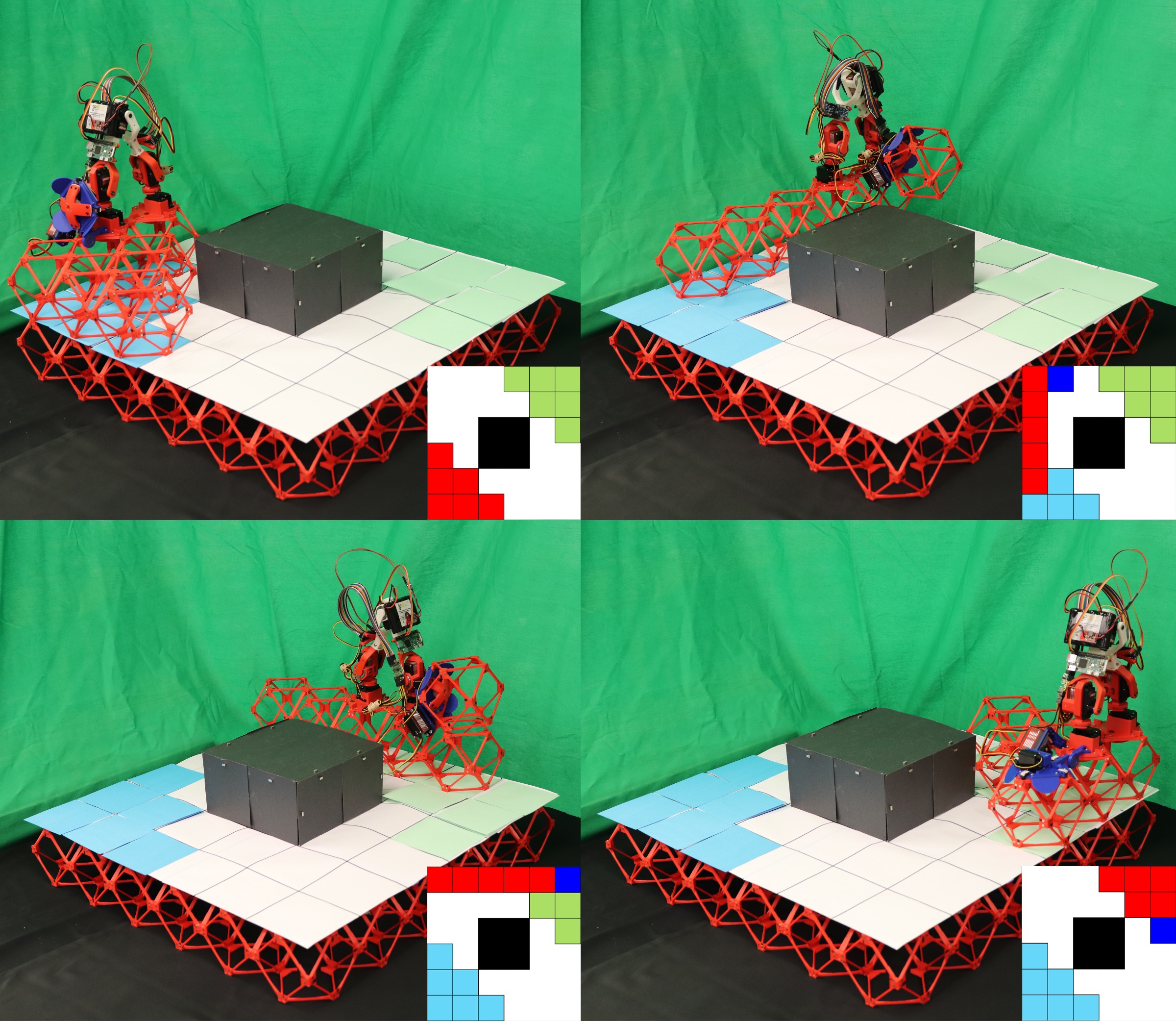}
    \put(2,82){a)}
    \put(52,82){b)}
    \put(2,38){c)}
    \put(52,38){d)}
    \put(1,55){\color{White}\vector(2,-0.95){25}}
    \put(1,55){\color{White}\vector(0,1){5.5}}
    \put(2,60){\rotatebox{-90}{\tiny\color{White}0.1 m}}
     \put(8,49){\rotatebox{-22.5}{\tiny\color{White}0.6 m}}
    \end{overpic}
    \caption{\label{fig:LeadingPhoto}
    This work is motivated by the challenge of reconfiguring a set of tiles using a simple robot~\cite{jenett2019material} that can move one tile at a time while walking on the remaining tiles. The sequence a) to d) shows an example of a robot reconfiguring (red) tiles. The inset images are the planner's view, where red squares are tiles, light blue and light green squares are the start and goal configurations, the blue square is the location where the robot will place the next tile, and the black squares are obstacles. See overview video at \url{https://youtu.be/Fp0MUag8po4}.
    }
\end{figure}

\section{Related Work}\label{sec:RW}
\subsection{Tile reconfiguration}
Reconfiguring a cellular structure is a challenging motion-planning problem, even when the problem is simplified to the placement of tiles in 2D.
Tile reconfiguration has been explored by many authors.
Gmyr et al.~\cite{gmyr2020forming}  explore algorithms for reconfiguring sets of hexagonal tiles, with applications in  construction of nano materials. 

Similar 2D reconfiguration work examined efficient methods for \emph{compacting} tile structures. 
Dumitrescu and Pach~\cite{dumitrescu2006pushing} introduced a method where one tile is moved at a time by sliding along the perimeter of the polyomino. Their algorithm could convert an $n$-tile start configuration to an $n$-tile goal configuration in $\mathcal{O}(n^2)$ moves if the start and goal configurations have non-zero overlap.
Moreno and Sacrist{\'a}n~\cite{morenoreconfiguring2020} modified the method of 
\cite{dumitrescu2006pushing} to be in-place, i.e., any intermediate configuration is fully contained in a one-tile offset of the original configuration's bounding box.
Their method turns any configuration into a solid rectangle within the bounding box of the start configuration.
Akitaya et al.~\cite{a.akitaya_et_al:LIPIcs.SWAT.2022.4} proved that it is NP-hard to minimize the number of sliding moves under this model.
They introduced a technique to improve reconfiguration performance by splitting the reconfiguration into a gathering stage and a compacting stage. In the gathering stage the structure is retracted such that each component is well-connected. In the compacting stage the polyomino is transformed into a single, solid, $xy$-monotone component. 
These works are related, but require the start and goal configurations to be overlapping and in obstacle-free environments, and allow any tile to move -- while our model requires the robot to travel to the next tile to move and carry it to its destination, and is capable of handling obstacles and start and goal configurations that do not overlap.

\subsection{Sampling-based methods}
While~\cite{dumitrescu2006pushing,morenoreconfiguring2020,a.akitaya_et_al:LIPIcs.SWAT.2022.4} introduced algorithms for reconfiguration, an alternative is to search for a solution using sampling. 

\emph{Rapidly-expanding random trees} (RRT) are a sampling-based motion-planning method designed to efficiently explore paths in high-dimensional spaces. RRTs were developed by LaValle and Kuffner~\cite{lavalle2001randomized}, and are often used for planning problems with obstacles and other constraints.

This approach is challenging because of the large configuration space.
The configurations of polyominoes form a high-dimensional space that is related to placing $n$ tiles in a free space comprised of $\lambda$ tiles -- every possible $n$-omino can potentially also be translated and rotated.
The number of viable configurations becomes smaller if obstacles constrain the build area.
At one extreme, if the build area is a $\lambda$-tile long, 1-wide column, there is only one possible $n$-omino with $\lambda-n+1$ possible translations.
If the free space is instead a $\sqrt{\lambda}\times\sqrt{\lambda}$~-~sized square, it is difficult to compute the number of valid configurations.
Ignoring the constraint that the configuration must be connected, there are $\frac{\lambda!}{n! ( \lambda-n)!}$ placements of $n$ tiles in a $\lambda$-tile free space.
Alternately, we could count the number of free polyominoes and ignore both obstacles and the position of the polyomino.
However, even the best methods for computing free polyominoes~\cite{jensen2001enumerations} require time and memory that grows exponentially in $n$.
To~address the challenges of this large configuration space, we rely on local planners and a simplified distance heuristic.

\subsection{Automated building of discrete cellular structures}


BILL-E can traverse the structure by locking its feet on tile faces, and it can modify the structure by picking up and placing tiles with a gripper located at the front of the robot. 
This platform is easy to manufacture and assemble, making it a good candidate for implementing and testing automated building algorithms.

Previous work has explored methods to simplify complicated tile construction. 
Niehs et al.~\cite{NiesReconfig} and Fekete et al.~\cite{fekete2022connected} showed that the robot control could be represented as a finite automata and still enable building bounding boxes out of tiles around arbitrary polyominoes, scale and rotate them while keeping it connected at all times. These approaches are also presented in a video by Abdel-Rahman et al.~\cite{abdel2020space}. 

\section{Definitions}\label{sec:Definitions}

\begin{figure}[!tb]
\setlength{\abovecaptionskip}{0pt}
\centering%
    \hfill%
    \adjincludegraphics[width=0.97\columnwidth,trim={{0\width} {0\width} {0\width} {0\width}},clip] {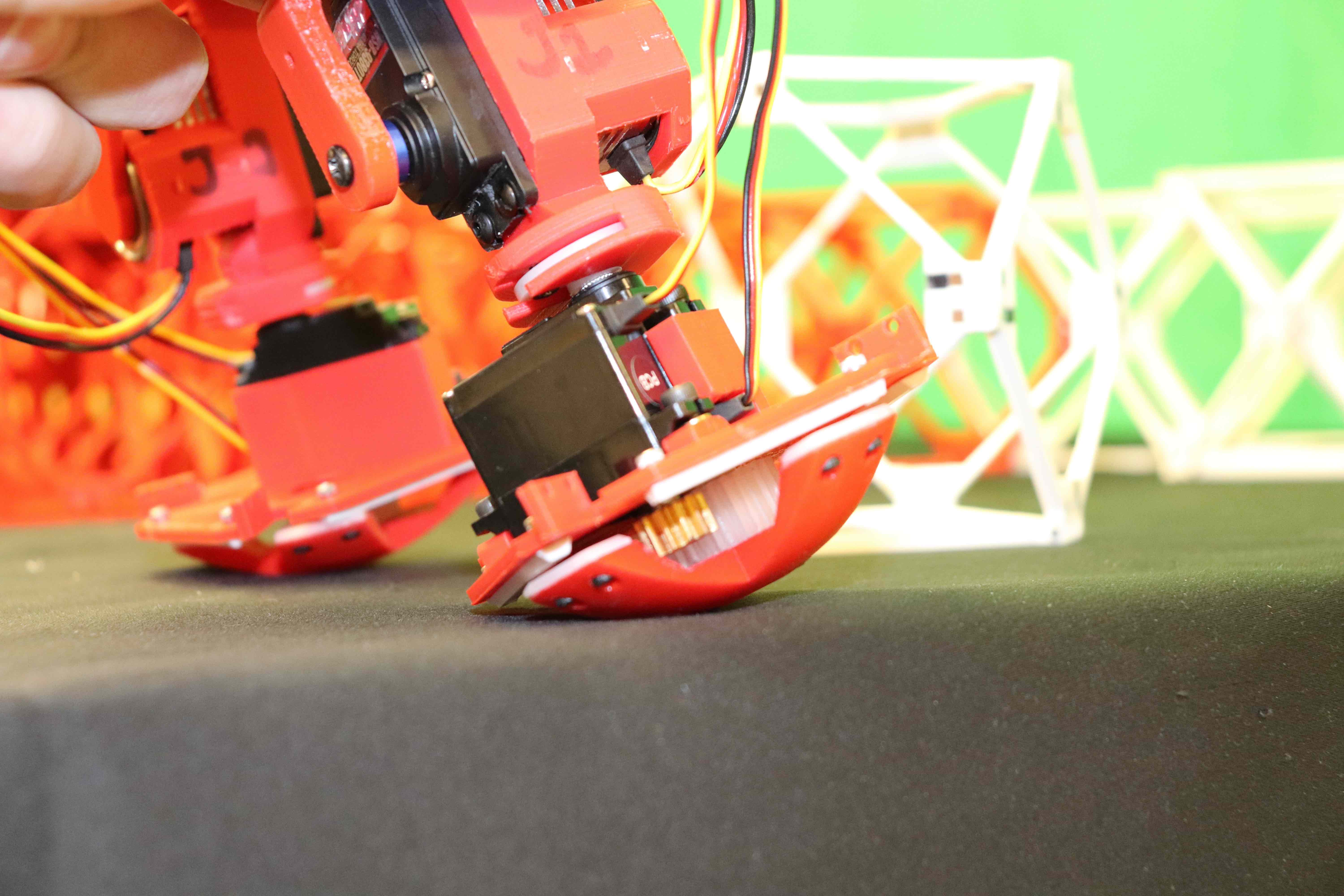}
    \hfill%
    \vspace{5pt}\\
    \hfill%
    \adjincludegraphics[width=0.97\columnwidth,trim={{0\width} {0\width} {0\width} {0\width}},clip] {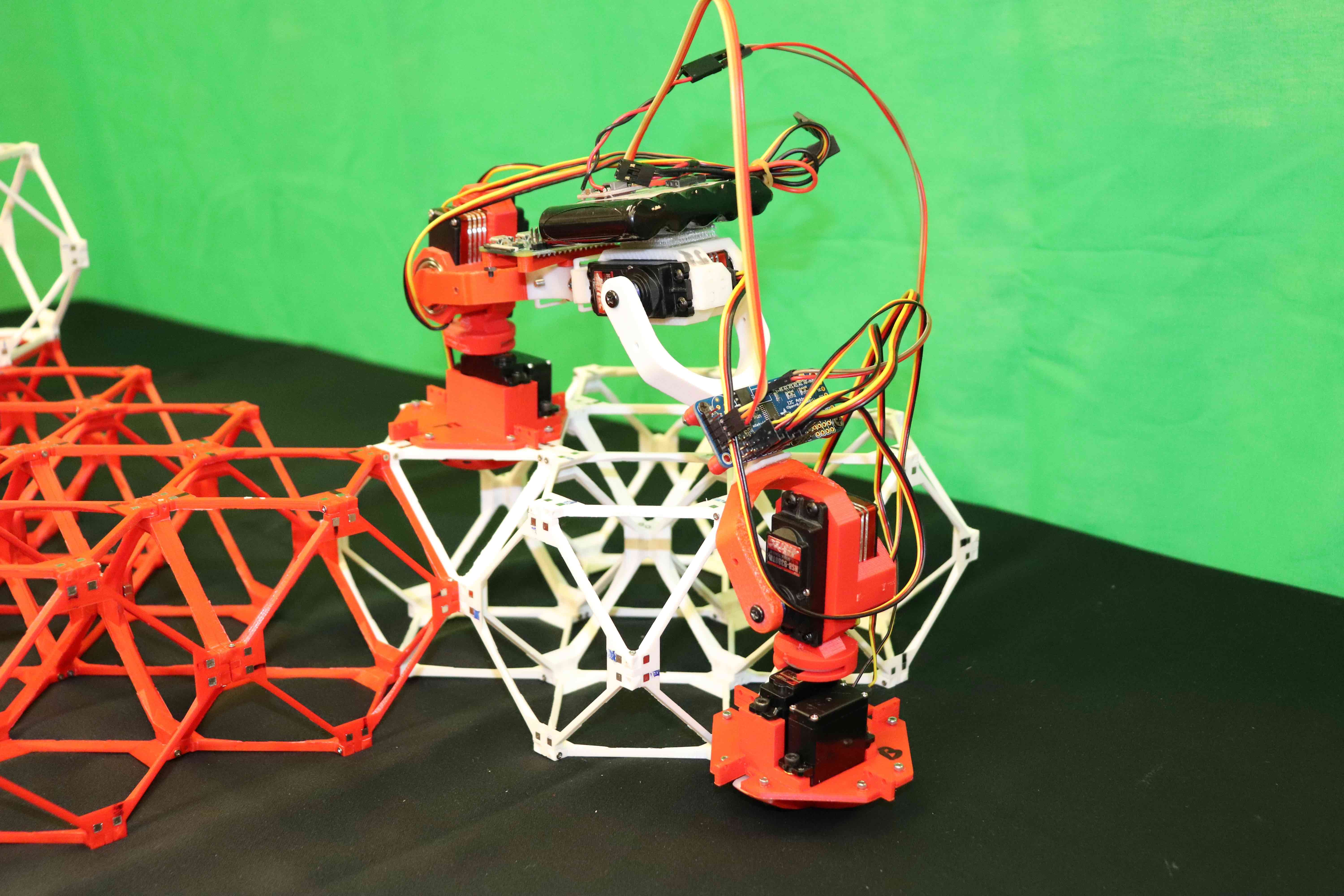}
    \hfill%
\caption{\label{fig:StepReq}
    The BILL-E bots are designed to walk on tiles only. The round design of the feet (top) does not allow the robot to step on other media (bottom). Because of this limitation, the structure must stay connected to ensure the BILL-E bot can reach every part of it. Additionally, disconnected structures could easily drift away in certain media like water or space.
}
\end{figure}

The \emph{workspace} is a rectangular unit grid, where each cell is either free, filled by a tile, or filled by an obstacle.
This paper searches for reconfiguration sequences to convert a set of tiles from a start to a goal configuration.
The start and goal configurations are each \emph{connected} components, i.e., for any given pair of tiles, there exists a path on tile-occupied cells that connects the two.
Such shapes are called \emph{polyominoes}. As neither the robot nor the tile carried by the robot can cross an obstacle, we assume that both configurations are located within the same connected component of free space.
Otherwise, no feasible reconfiguration sequence exists.

A configuration $S$ is converted to another one by walking the robot to an adjacent position of a tile $t$, picking up $t$, and walking along a shortest edge-connected path on $S\setminus\{t\}$ before placing the tile in another location.
An ordered series of these operations is called a \emph{reconfiguration sequence}.

We refer to the distance walked before picking up a tile as the \emph{pickup distance} $d_P$ and the distance walked while carrying a tile as the \emph{dropoff distance} $d_D$.
These distances on the polyomino are determined by a \emph{breadth-first search} tree (BFS) over the configuration.

In general, the distance between two workspace positions is defined by the length of the geodesic edge-connected path between them, taking into account the obstacles.


The \emph{carry time} of a reconfiguration sequence refers to the sum of all the dropoff distances. Conversely, the \emph{empty travel time} of a reconfiguration refers to the sum of all the pickup distances. Thus, the total travel time of a sequence is the sum of both the carry time and empty travel time.

A \emph{minimum-weight perfect matching} (MWPM) is a matching between tiles from the start and the goal configurations of minimum sum of dropoff distances.



\section{Theoretical Background}\label{subsec:LowerBounds}
Before discussing some practical methods and their results, we will briefly describe the potential impact of obstacles on the length of reconfiguration sequences, as well as introduce theoretical lower bounds on the pickup and dropoff distances that may have to be traversed.
\subsubsection*{Obstacles matter}
It is easy to show that we can employ obstacles to create instances which require an arbitrarily higher number of moves than their obstacle-free counterparts.
For example, consider two configurations, 
consisting of parallel lines that are two units apart from each another.
By placing obstacles in a straight line between the two, we can increase the cost of a BFS-based MWPM by a factor that is linear in the number of obstacles, see Fig.~\ref{fig:obstacles-lower-bound}.
This corresponds to an increase in both the carry time and the empty travel time.

\begin{figure}[h]
    \centering
    \includegraphics[scale=2]{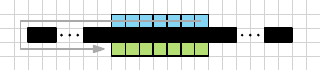}
    \caption{Obstacles (shown in black) can force arbitrarily long ``detours'' using the MWPM between start (blue) and goal (green) configuration.}
    \label{fig:obstacles-lower-bound}
\end{figure}

\subsubsection*{Lower bound on carry time}
In the absence of obstacles, a significantly larger number of moves than the cost of an obstacle-free MWPM may still be necessary.
We can bound this based on the carry time of applicable reconfiguration sequences.
Consider a square-like ``c-shaped'' start configuration of $n$~tiles and a goal configuration which requires moving one tile from one terminal of the ``c'' to the other, see Fig.~\ref{fig:theoretical-lower-bound}\,(left).
Assuming a constant-size (i.e., $\mathcal{O}(1)$) gap between the two terminals, the MWPM of this instance has constant cost as well.
Applicable strategies to reconfigure start into goal require either building a shortcut between the terminals and moving the tile from one side to the other, or picking it up and walking along the entirety of the configuration.
Since both the arms of the ``c'' as well as its left edge are of length~$\Theta(n)$, a shortcut would have to cover the same distance, implying an $\Omega(n)$ lower bound at least on the sum of dropoff distances.
Similarly, the carry time spent walking a tile along the entire ``c'' implies an $\Omega(n)$ lower bound on the dropoff distance as well.
We conclude that the carry time in applicable solutions is larger than the MWPM by a factor of $\Omega(n)$ for these configurations.

\begin{figure}
    \centering
    \adjincludegraphics[page=3, scale=1.25]{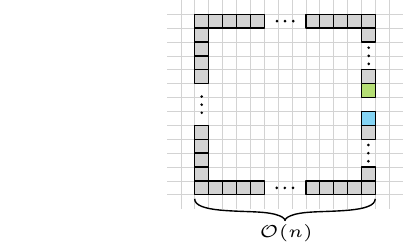}
    \hfill
    \adjincludegraphics[page=4, scale=1.25]{figures/theoretical-lower-bound_blue.pdf}
    \caption{Lower bound examples for carry time (left) and empty travel time (right). Note that gray tiles are contained in both start and goal configuration.}
    \label{fig:theoretical-lower-bound}
\end{figure}

\subsubsection*{Lower bound on empty travel time}
In a similar fashion, we can bound the sum of pickup distances from below.
By mirroring the ``c'' along its left edge to form a ``\reflectbox{c}\!c'', we define a pair of configurations which still have a MWPM of constant cost, see Fig.~\ref{fig:theoretical-lower-bound}\,(right).
While moving tiles from one terminal to the other can be achieved without empty travel, moving tiles in between both terminal pairs requires being present at all four terminals at least once.
This implies that we traveled from one terminal pair to the other at some point, i.e., we traversed a pickup distance of $\Omega(n)$.

We conclude that there exist configurations where both minimal dropoff and pickup distances are in $\Omega(n)$, even if the MWPM is of constant cost.

\section{Methods}\label{sec:Methods}

In this paper, we tackle the problem of determining a reconfiguration sequence for converting a start configuration into a goal configuration using a rapidly-expanding random tree-star~(RRT*)~\cite{karaman2011sampling}. 
The RRT* proceeds by building a tree where each node is a reachable configuration of the tiles. 
The root is the start configuration.
We expand the tree by generating a random configuration (by constructing a random polyomino in the workspace), and searching for the node of the tree that is nearest to the current configuration.
We then take this nearest node and attempt to reconfigure it toward the random configuration by applying at most $rad$ dropoffs as determined by a local planner.  
The resulting configuration is then added to the RRT* tree.  
The RRT* then rewires the tree to form shortest paths. 

\subsection{Local planner algorithms}\label{subsec:LocalPlanners}
To reconfigure one configuration into another, we implemented two local planners. 
These planners take a start configuration $S$, a goal configuration $G$, and an obstacle map $O$.  They then return the pickup location $P$ and the dropoff location $D$ for one tile.
A~\emph{complete motion planner} either produces a solution in finite time or correctly reports that there is none. 

We call any tile that can be picked up without disconnecting the remaining tiles a \emph{leaf} tile.
The shortest path along a polyomino from $P$ to $D$ is constructed by computing a BFS along the set of tiles $S\cup D$.
The shortest path between coordinates $C_i$ and $C_j$ is constructed by computing a BFS along the obstacle free set $\neg O$.

Algorithm \ref{alg:GrowLargestComponent} is our first local planner and is called {\sc GLC}($S,G,O$) for \emph{Grow Largest Component}.  
It behaves differently depending on if the start and goal polyominoes overlap. If the start and goal polyominoes do not overlap, the closest tiles between $S$ and $G$ are found. Then the closest leaf tile in $S$ to this gap is moved to shrink the gap by one.
If there is an overlap, the largest connected component $M$ in the overlap is computed.
We define $N$ as the set of all tiles in $G\setminus M$ that are adjacent to any tile of $M$.
All the leaf tiles in $S$ that are not already in the connected component $M$ are identified as the set~$L$.
Then the closest pair in  $\{L,N\}$ is moved.
While this planner is complete (see Theorem~\ref{the:glc-complete} below), its solution is not guaranteed to be optimum. However, it is fast to compute and serves as an upper bound on the optimal solution.



\begin{algorithm}
\caption{ {\sc GLC}($S,G,O$)}\label{alg:GrowLargestComponent}
\begin{algorithmic}
\Require $S,G$ are each connected components
\Require $S$ and $G$ in same connected component of $\neg O$
\If{$S$ and $G$ have no overlap}
    \State $\{S_e,G_e\} \gets $ closest pair in $\{S,G\}$
    \State $N \gets $ all neighbor tiles to $S \notin O$
    \State $D \gets $ closest position in $N$ to $G_e$
    \State $L \gets $ all leaf nodes in $S$
    \State $P \gets $ closest tile to $D \in L$
\ElsIf{$S$ and $G$ overlap}
    \State $M \gets $ largest connected component in $S \cap G$
    \State $N \gets $ all neighbor tiles to $(G \setminus M)$
    \State $L \gets $ all leaf nodes in $S \setminus M$
    \State $\{P,D\} \gets$ closest pair in $\{L,N\}$
\EndIf \\
\Return{$\{P,D\}$ }
\end{algorithmic}
\end{algorithm}

\begin{theorem}
 {\sc GLC}($S,G,O$) is a complete motion planner.
 \label{the:glc-complete}
\end{theorem}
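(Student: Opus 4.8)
The plan is to show that, under the stated preconditions, {\sc GLC} never fails to produce a valid single-tile move until $S=G$, that this happens after finitely many moves, and that the only way a reconfiguration can fail to exist is ruled out by the preconditions; together these establish the two halves of the definition of a complete motion planner.

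First I would dispose of the infeasibility case. If $S$ and $G$ do not lie in the same connected component of the free space $\neg O$, then neither the robot nor a carried tile can ever cross between them, so no reconfiguration sequence exists, and this is exactly what the precondition check detects. This handles the ``correctly reports that there is none'' half. For the remainder I assume the preconditions hold and prove termination-with-progress; note that since each operation moves a single tile, the count $|S|=|G|=n$ is invariant throughout.

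The core is a monotone potential argument, split along the two branches of the algorithm, under the standing invariant that every move keeps the configuration connected. In the overlap branch the invariant is immediate: the dropoff $D\in N$ is adjacent to $M$, and the picked leaf $P$ is taken from $S\setminus M$, so $M$ survives the pickup and $(S\setminus\{P\})\cup\{D\}$ remains connected. For progress here I would take the potential to be $|M|$, the size of the largest connected component of the overlap $S\cap G$: since $D\in G\setminus M$ is adjacent to $M$, placing a tile there produces a connected overlapping set $M\cup\{D\}$ of size $|M|+1$, while removing $P\notin M$ cannot shrink $M$, so $|M|$ strictly increases. As $|M|\le n$ with equality exactly when $G\subseteq S$, i.e.\ $S=G$, the overlap branch reaches the goal in at most $n$ moves. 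In the no-overlap branch I would instead use the geodesic gap between $S$ and $G$: moving a leaf of $S$ onto the first free cell of a shortest $S$-to-$G$ path strictly decreases this finite gap, so after finitely many moves $S$ and $G$ overlap and control passes to the first branch. Chaining the two phases yields termination with $S=G$.

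The step I expect to be the main obstacle is verifying that a valid move always exists whenever $S\neq G$, i.e.\ that the sets the algorithm selects from are non-empty and the chosen move respects connectivity. Two points need care. First, in the overlap branch one must show that $L$, the leaf tiles of $S\setminus M$, is non-empty; I would prove the lemma that a finite connected polyomino $S$ with a connected proper subset $M$ always has a leaf tile lying in $S\setminus M$, e.g.\ by choosing a spanning tree of $S$ in which $M$ is a subtree, contracting $M$, and observing that the resulting tree has at least one degree-one vertex other than the contracted node, which corresponds to a removable tile of $S\setminus M$. Second, in the no-overlap branch one must ensure the picked leaf $P$ and the dropoff $D$ are mutually compatible, so that $D$ stays adjacent to $S\setminus\{P\}$ after the pickup; here I would argue that $P$ and $D$ can be chosen so that the geodesic gap still decreases while connectivity is preserved, the only delicate case being when the single $S$-neighbor of $D$ is itself the leaf on the $G$-facing side. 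Once these non-emptiness and connectivity facts are secured, the potential argument closes and completeness follows.
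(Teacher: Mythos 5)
Your proposal is correct and follows essentially the same route as the paper's proof: a two-branch case analysis in which the no-overlap phase strictly shrinks the geodesic gap by moving leaf tiles along a shortest path, and the overlap phase grows the largest overlap component $M$ using a spanning-tree argument to guarantee a removable leaf in $S\setminus M$. You are somewhat more explicit than the paper about the potential functions, termination bounds, and the delicate connectivity case in the no-overlap branch, but the underlying ideas coincide.
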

\begin{proof}
In case that $S$ and $G$ do not overlap, let $S_e$ and $G_e$ be the endpoints of a shortest path between $S$ and $G$.
If no such path exists, the goal configuration $G$ is not reachable from $S$.
Since $S$ always contains at least two leaf tiles, a leaf tile can always be picked up from the configuration and placed on the first empty position of the path towards $G$, reducing the distance between $S_e$ and $G_e$ by one.
Once this distance is one, i.e., they are adjacent, the result of such a move is a non-empty overlap $S\cap G$ which contains precisely~$G_e$ for the subsequent iteration.

Once the overlap is non-empty, the following holds true.
So long as there exists at least one tile $t \notin M$, i.e., a tile that is not part of the largest connected component, there exists a spanning tree of the dual graph of $S$ which has a leaf $t' \in L \setminus M$.
This directly implies that if $t$ itself is not a leaf, $t$ is part of a path to a leaf tile $t'$ outside of $M$.
We conclude that at any given point, it is possible to determine a leaf tile that can safely be moved to become part of the largest connected component $M$ of the overlap.
\end{proof}

\begin{figure*}[htb]
\centering
\begin{tabular}{ccccc}
    total steps = 58 & total steps = 85 & total steps = 116 & total steps = 150 & total steps = 191 \\
    \!\!\!\!\rotatebox{90}{~~~~~\scriptsize{ \sc GLC}}\!
    \adjincludegraphics[width=0.175\textwidth,trim={{0.2\width} {0.5\width} {0.275\width} {0.5\width}},clip] {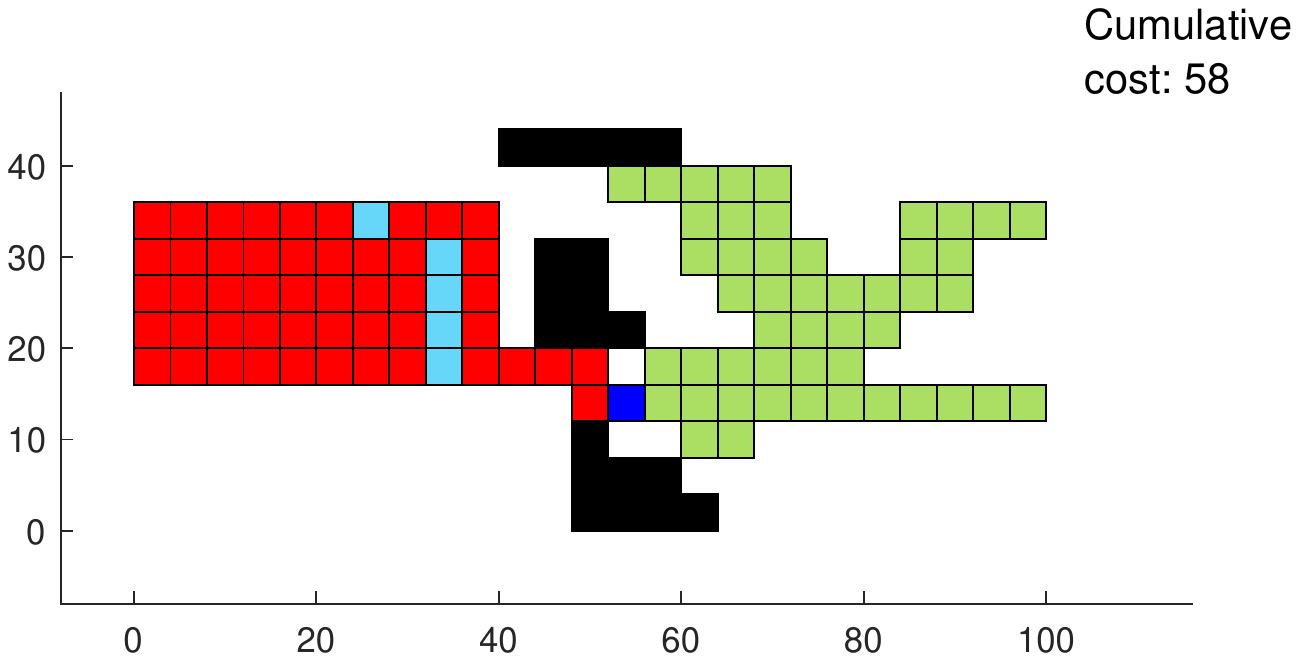} & 
    \adjincludegraphics[width=0.175\textwidth,trim={{0.2\width} {0.5\width} {0.275\width} {0.5\width}},clip] {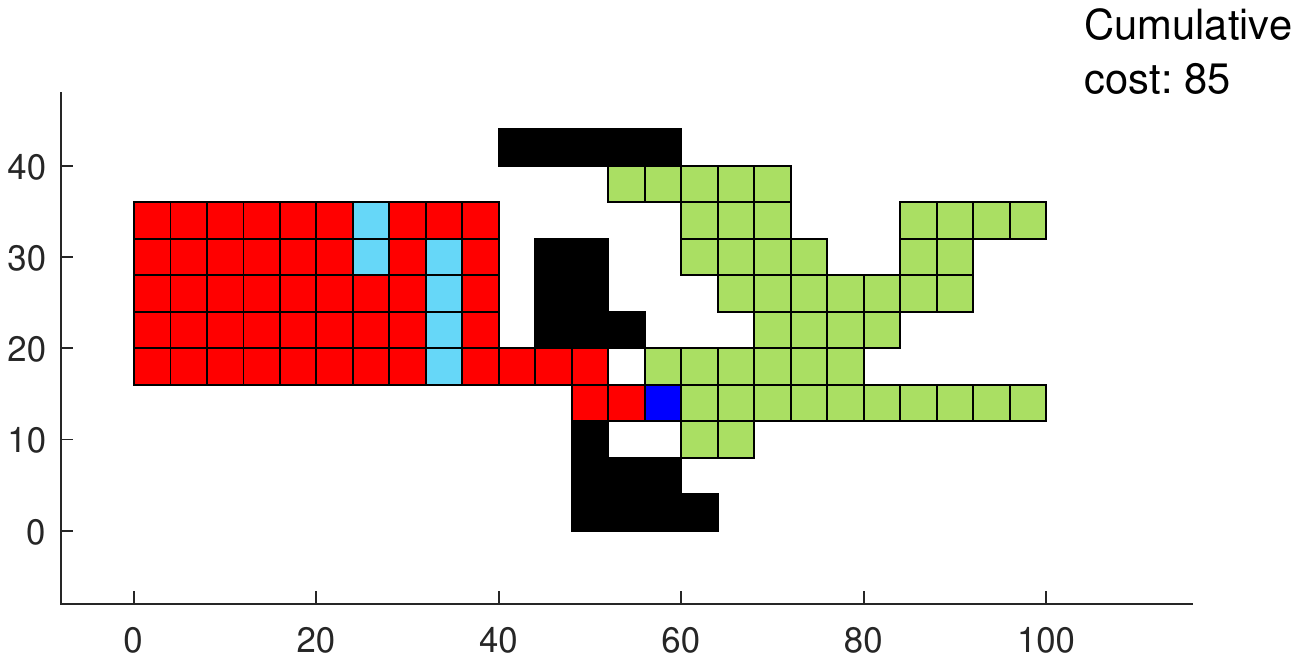} & 
    \adjincludegraphics[width=0.175\textwidth,trim={{0.2\width} {0.5\width} {0.275\width} {0.5\width}},clip] {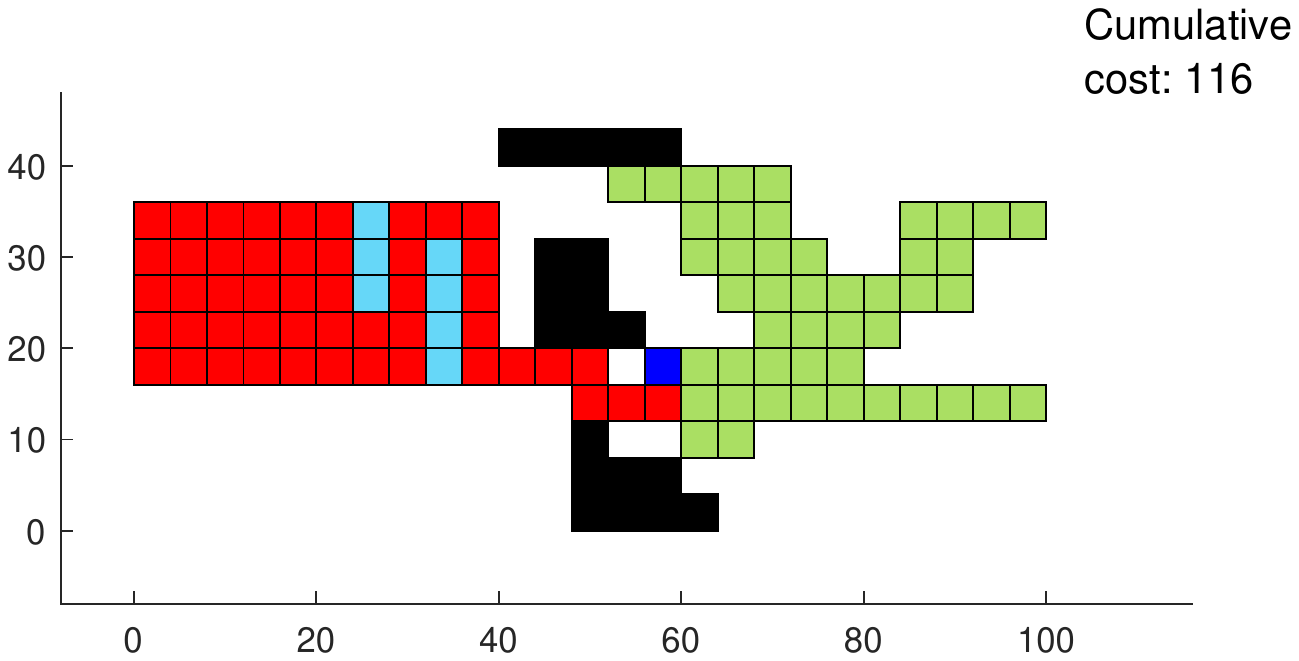} & 
    \adjincludegraphics[width=0.175\textwidth,trim={{0.2\width} {0.5\width} {0.275\width} {0.5\width}},clip] {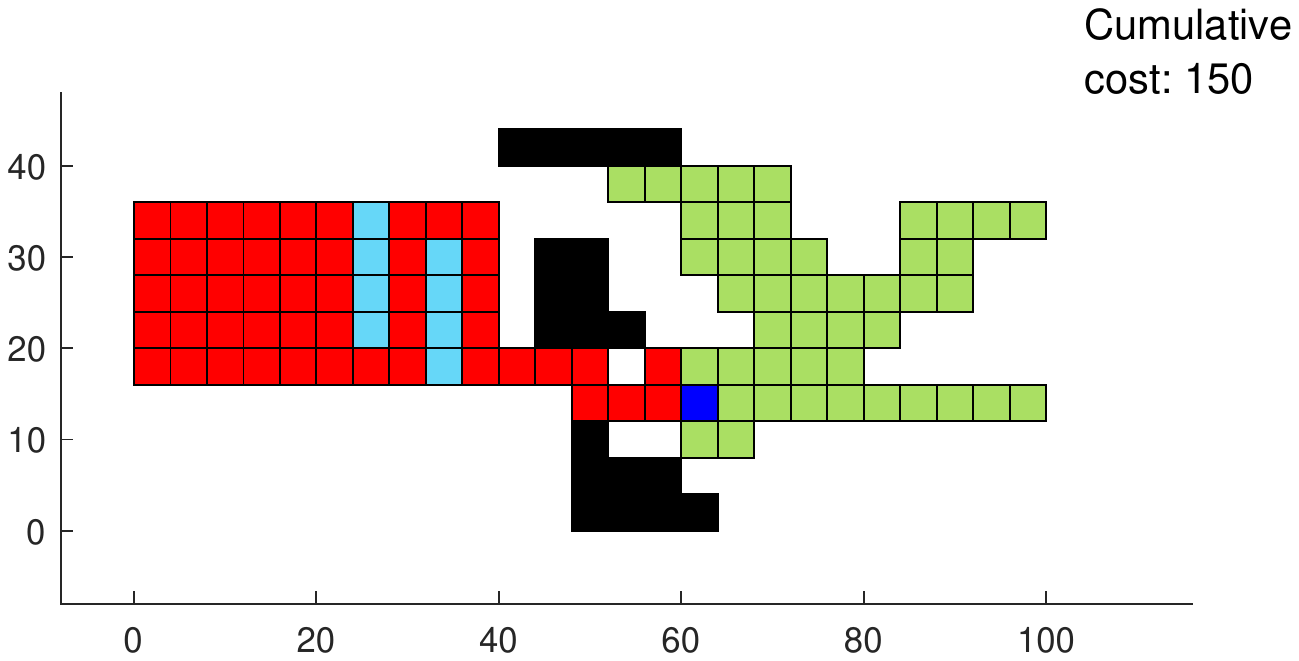} & 
    \adjincludegraphics[width=0.175\textwidth,trim={{0.2\width} {0.5\width} {0.275\width} {0.5\width}},clip] {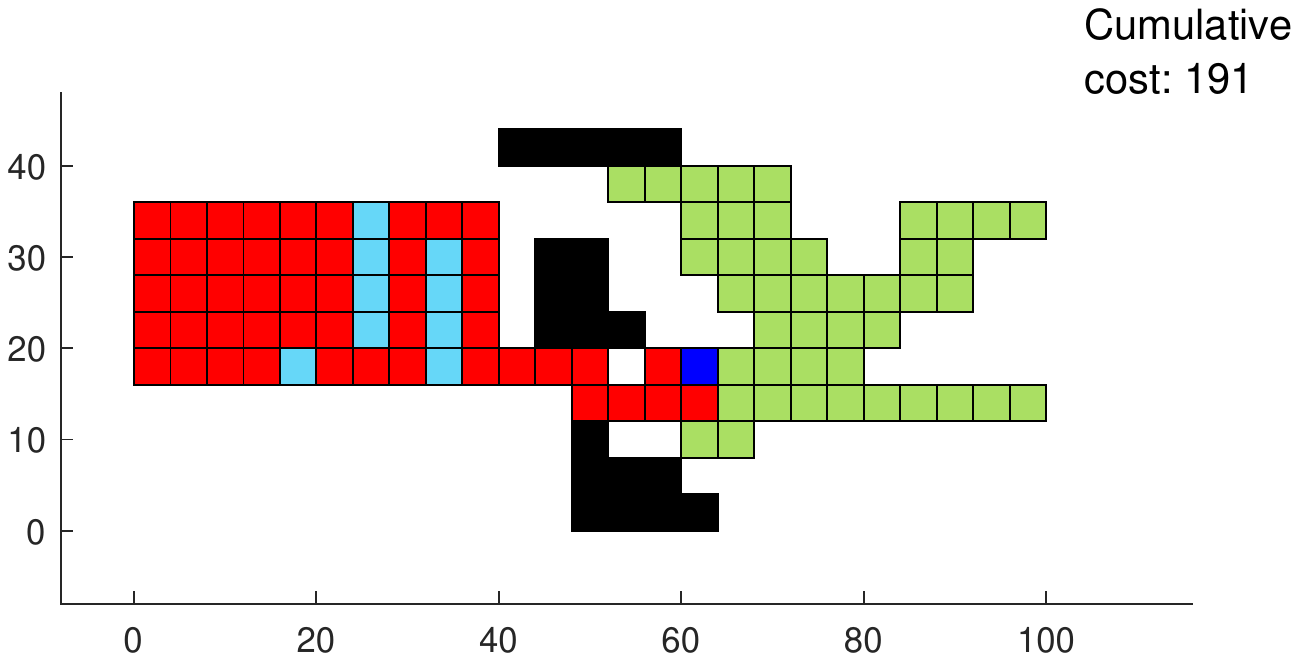} \\ [2\tabcolsep]
    total steps = 126 & total steps = 157 & total steps = 190 & total steps = 223 & total steps = 256 \\
    \!\!\!\!\rotatebox{90}{\scriptsize{~\sc MWPMexpand}}\adjincludegraphics[width=0.175\textwidth,trim={{0.2\width} {0.5\width} {0.275\width} {0.5\width}},clip] {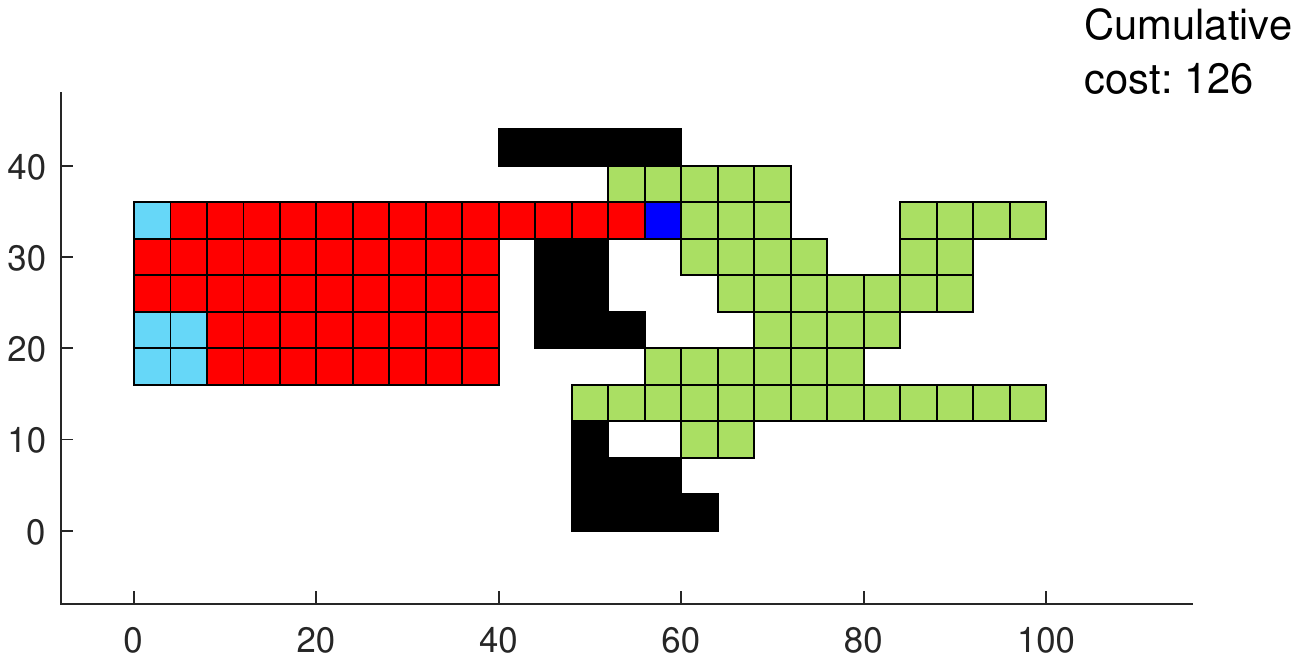} & 
    \adjincludegraphics[width=0.175\textwidth,trim={{0.2\width} {0.5\width} {0.275\width} {0.5\width}},clip] {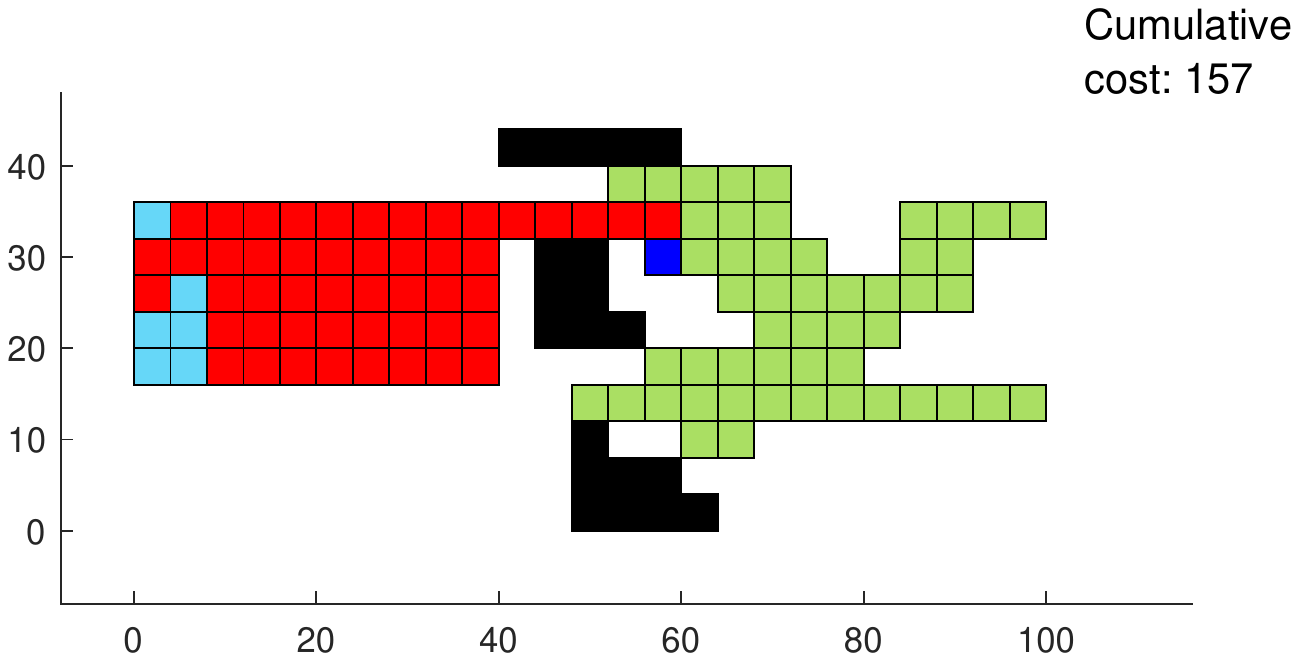} & 
    \adjincludegraphics[width=0.175\textwidth,trim={{0.2\width} {0.5\width} {0.275\width} {0.5\width}},clip] {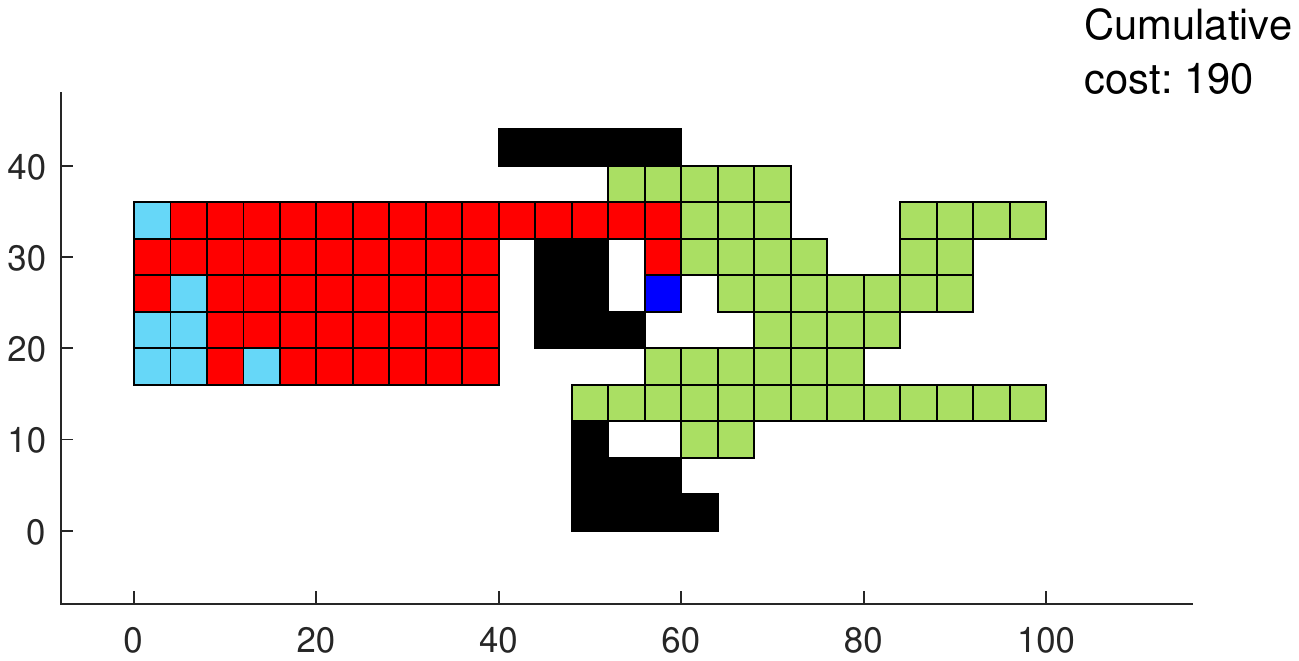} & 
    \adjincludegraphics[width=0.175\textwidth,trim={{0.2\width} {0.5\width} {0.275\width} {0.5\width}},clip] {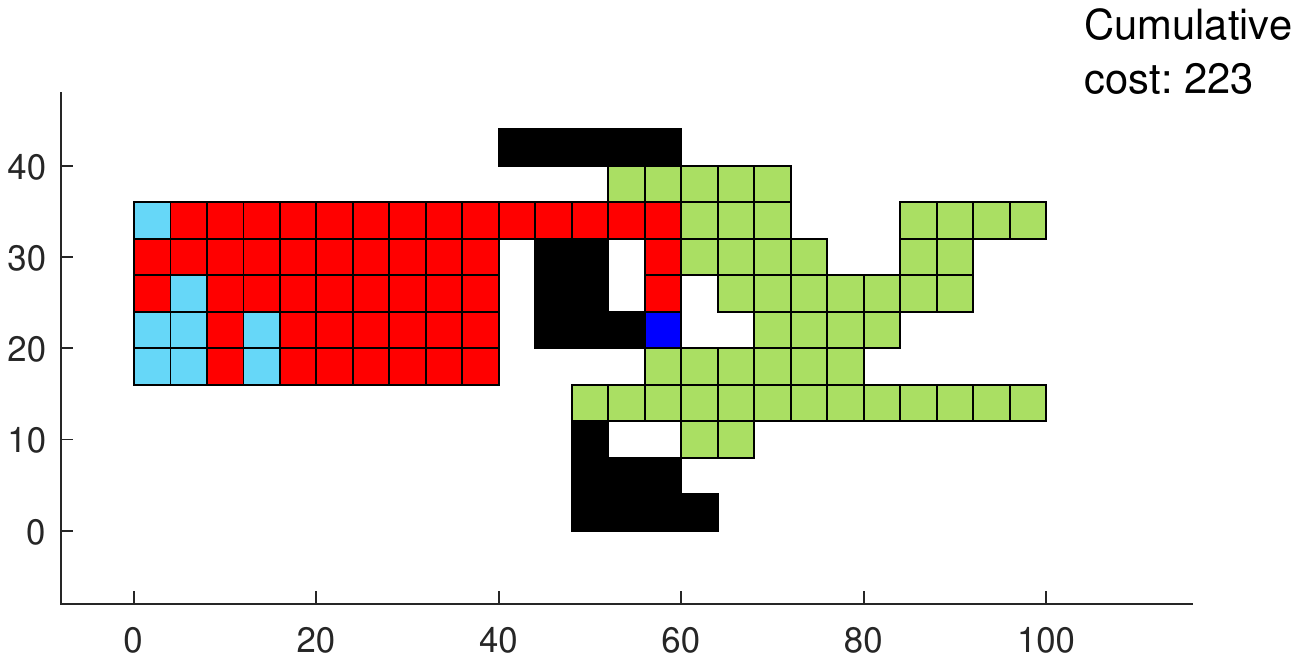} & 
    \adjincludegraphics[width=0.175\textwidth,trim={{0.2\width} {0.5\width} {0.275\width} {0.5\width}},clip] {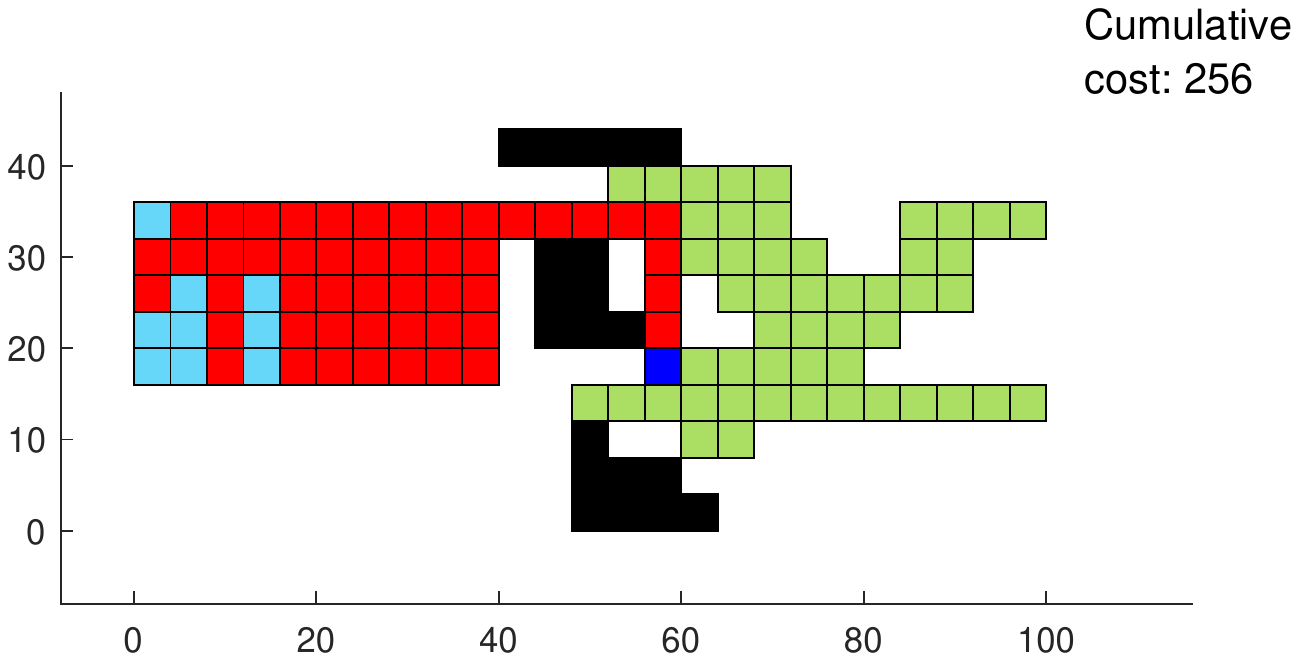}
\end{tabular}
\caption{\label{fig:BuildSeqComp}
Five consecutive dropoffs by our local planners {\sc GLC} (top) and {\sc MWPMexpand} (bottom). {\sc GLC} always places tiles on the goal structure when possible, while {\sc MWPMexpand} can create bridges to reach further parts of the structure sooner.}
\end{figure*}

The reconfiguration sequence determined by {\sc GLC} takes total travel time $\mathcal{O}(n^2)$ for instances with start distance no more than $n$ between $S$ and $G$.
This stems from the fact that every tile is moved at most once as soon as the current and goal configurations overlap, since every tile that has been used to grow the largest component remains in that position until $G$ is reached.
Both the pickup and dropoff distances for each tile are bounded from above by $n$, resulting in~$\mathcal{O}(n^2)$ total travel time.
Some instances actually require at least~$\Omega(n^2)$ travel time,  e.g., when moving a row of $n$ tiles to the right by $n$ units.

Our second local planner is {\sc MWPMexpand}($S,G,O$), see Algorithm \ref{alg:MWPMexpand}. It uses a minimum-weight perfect matching between all the tiles in $S$ and $G$, where distances are calculated according to the shortest path around obstacles using BFS.
The matching is sorted by distance between the pairs, and of all the leaf nodes in $S$, the one with the longest distance matching is moved as close as possible (along the configuration $S$) to its goal destination.
While this planner uses a minimum-weight perfect matching, this is not a complete planner, and can get stuck.  For instance, let $S \rightarrow G$ be the reconfiguration
\adjincludegraphics[width=0.15\columnwidth,trim={{0\width} {0\width} {0\width} {0\width}},clip] {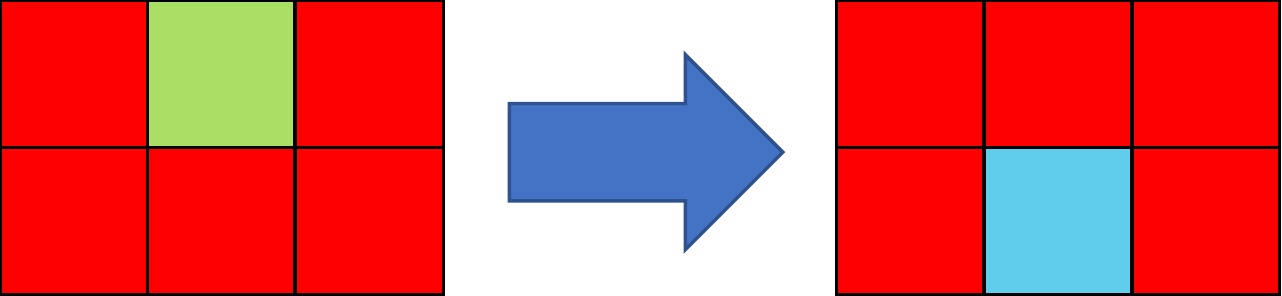}, which seeks to move the middle tile upwards. {\sc MWPMexpand} will only move the middle tile, but this cannot be moved without disconnecting the polyomino.  
{\sc GLC}($S,G,O$) can handle such a situation since it selects one component and grows it.

\begin{algorithm}
\caption{ {\sc MWPMexpand}($S,G,O$)}\label{alg:MWPMexpand}
\begin{algorithmic}
\Require $S,G$ are each connected components
\Require $S$ and $G$ in same connected component of $\neg O$
\State $\mathbf{D} \gets $ {\sc BFS} distance between each $S$ and $G$ tile
\State $\mathbf{M} \gets$ min-weight perfect matching for $S$ to $G$ using $\mathbf{D}$
\State $L \gets $ all leaf nodes in $S$
\State $\{P,G_m\}\gets$ longest distance pair in $\mathbf{M}$ with $P\in L$ 
\State $N \gets $ all neighbor tiles to $S \notin O$
\State $D \gets $ closest position in $N$ to $G_m$
\\ \Return{$\{P,D\}$ }
\end{algorithmic}
\end{algorithm}

Fig.~\ref{fig:BuildSeqComp} demonstrates an example of polyomino reconfiguration using the {\sc GLC} and {\sc MWPMexpand} algorithms.

\subsection{Tree nodes}

The configuration of a polyomino is described by a binary occupancy grid with the same size as the workspace. These configurations constitute the nodes of the RRT*, and they can only be connected to other nodes if their configurations differ by a valid dropoff (see Fig.~\ref{fig:TreeNodes}). A valid dropoff is described by a tile that can be picked up, and a free path on the polyomino to carry it to a location where it can be placed. The tile that was picked up to create a node is referred to as the source, the location where it was placed is the target.

\begin{figure}[tb]
\setlength{\abovecaptionskip}{0pt}
\centering
    \adjincludegraphics[width=1\columnwidth,trim={{0\width} {0\width} {0\width} {0\width}},clip] {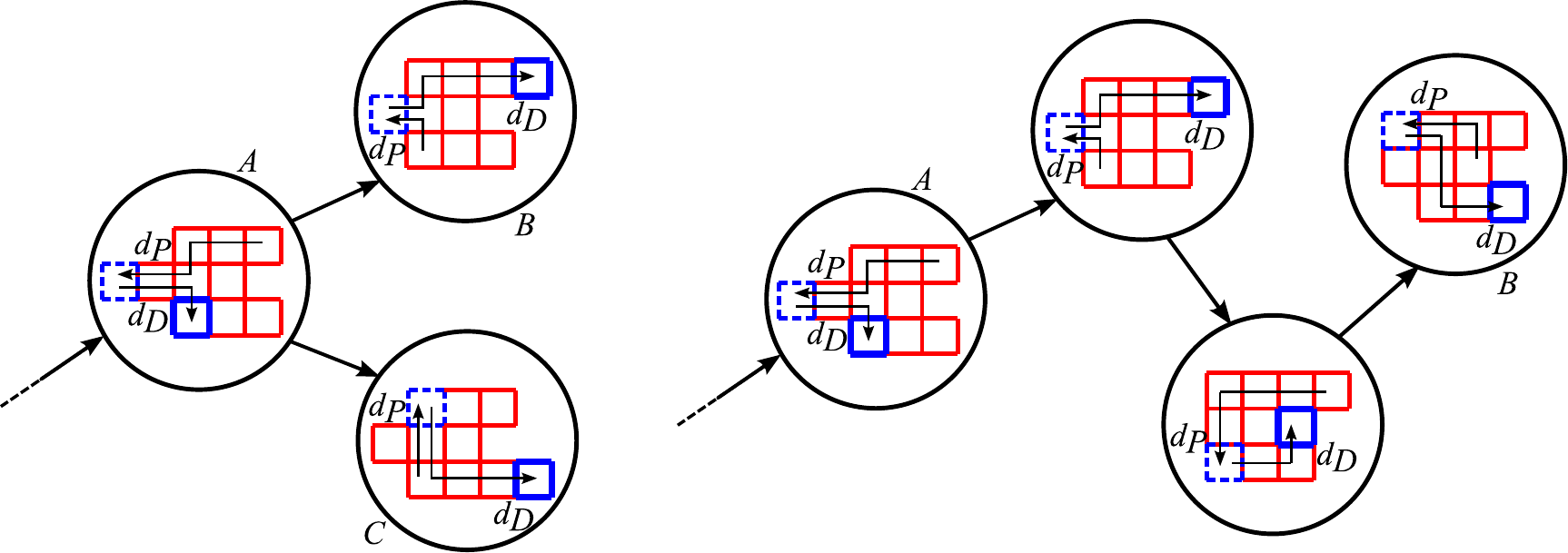}
    \caption{\label{fig:TreeNodes}
    An example of nodes in our RRT*. Each node represents a configuration. (Left) Node $A$ can create nodes $B$ and $C$ with just one dropoff, so it connects to them.
    However, $B$ and $C$ cannot create each other with just one dropoff so they are not connected. (Right) To increase exploration rate, nodes can be added after more than one dropoff. Here, $B$ is three dropoffs away from $A$ and is added as a node to the tree, while the intermediate configurations are not.
    }
\end{figure}

The cost of moving between connected nodes $A$ and $B$ is equal to the sum of $d_{P}$ and $d_{D}$, i.e., the distance from $A$'s target to $B$'s source and from $B$'s source to $B$'s target, respectively.
The cost to move to a node is dependent on the parent's target, so rewiring nodes in a section of the tree can affect faraway nodes.

To increase exploration rate while keeping the size of the tree manageable, nodes can be added after $rad >1$ dropoffs. In this case, each node in the RRT* must contain the sources and targets for the intermediate configurations. 
Nodes can still be connected in less dropoffs if rewiring occurs or the random configuration is reached.

\subsection{Distance heuristic}\label{subsec:distanceHeuristic}

The algorithm extends the closest node in the tree toward a random configuration using a local planner.
To determine the relative distance between two configurations, we use a simple distance heuristic $h$ that describes how close they are to each other.
The overlap $ov$ refers to the number of shared tiles between a given pair of configurations. Clearly, two configurations are identical exactly if their tiles occupy the same locations.
Unfortunately, $ov$ provides no information if the pair does not overlap. To correct this, we also consider the center of mass $com$ of each configuration, and the Euclidean distance between them.
We therefore define $h$ as follows:

\begin{equation}\label{eq:javeuristic}
 h=\frac{ov+1}{\max\{\|com_{A}-com_{B}\|_2,\,0.1\}}.
\end{equation}

Two 
configurations can have the same center of mass, so a lower threshold is imposed on the denominator of Eq.~\eqref{eq:javeuristic} to avoid large results that can dominate over other candidates. Additionally, the numerator is equal to $ov+1$ so a heuristic value can be assigned to configurations with no overlap.

\subsection{Dynamic bias}

The RRT* alternates between growing toward random configurations and 
the goal itself. The probability of choosing the goal is usually set to a small number, e.g., $5$--$10$\%, so the tree spends more time exploring~\cite{lavalle2001randomized}. This helps to avoid converging towards local minima by finding more paths.

If the start and goal configurations are initially far apart according to our heuristic, either because of little overlap or a large distance between their centers of mass, it can be advantageous for the tree to explore using a small goal bias. 
As tree nodes approach the goal structure, increasing the bias can accelerate finding a path.

We implement a dynamic bias that changes as the tree gets closer to the goal.
This leverages the advantages of a small bias at the beginning, prioritizing exploration, and a higher bias that speeds up the path creation to reduce time to first solution. This is similar to the concept of simulated annealing~\cite{kirk1983}, 
 which searches for an optimum value using a search radius that decays to zero as time increases. 

We define the dynamic bias $\text{\emph{bias}}_{\textrm{dyn}}$ as the sum of a base value $\text{\emph{bias}}_{\textrm{base}}$, and an evaluation of the current tree's performance.
This performance value is the ratio of $\mu_{ov}$, i.e., the mean of the amount of overlap between the tree's nodes and the goal, and the total number of tiles $n$.
An upper threshold $\text{\emph{bias}}_{\textrm{max}}$ allows us to control the maximum dynamic bias toward the goal configuration.
The bias value is therefore computed as follows:

\begin{equation}\label{eq:dyn_bias}
 \text{\emph{bias}}_{\textrm{dyn}}=\text{\emph{bias}}_{\textrm{base}}+\left(\text{\emph{bias}}_{\textrm{max}}-\text{\emph{bias}}_{\textrm{base}}\right)\frac{\mu_{ov}}{n}.
\end{equation}




Algorithm~\ref{alg:RRTiles} shows the complete RRT* implementation for this application.

\begin{algorithm}[!]
\caption{RRT*($S,G,O$)}\label{alg:RRTiles}
\begin{algorithmic}
\Require $S,G$ are each connected components 
\Require $S$ and $G$ in same connected component of $\neg O$
\State $T(0) \gets S$ 
\State $i \gets 1 $
\While {$i < $ max\_nodes \textbf{or} total\_cost\_to$(G) < $ threshold}
    \State Calculate current $\text{\emph{bias}}_{\textrm{dyn}}$
    \State $Q \gets  n_{\textrm{rand}}$ or $G$ according to $\text{\emph{bias}}_{\textrm{dyn}}$
    \State $\textit{dad} \gets n_{\textrm{nearest}}$ to $Q$ (that has not already been extended toward $G$ if $Q \equiv G$) 
    \State Use local planner to extend $\textit{dad}$ toward $Q$
    \State $\textit{child} \gets $ potential new node
    \State $\textit{parent}(\textit{child}) \gets \textit{dad}$
    \For {$k \gets 1$ to length of $T$}
        \State Update existing node if $\textit{child}$ repeats configuration
    \EndFor
    \If {$\textit{child}$ is new configuration}
        \For {$k \gets 1$ to length of $T$}
            \State Update $\textit{parent}(\textit{child})$ to $T(k)$ if lower cost
        \EndFor
        \For {$k \gets 1$ to length of $T$}
            \State Update $\textit{parent}(T(k))$ to $\textit{child}$ if lower cost
        \EndFor
        \State $T(n) \gets  \textit{child}$
        \State $i \gets i+1$
    \EndIf
\EndWhile
\State $\textit{Seq} \gets $ least costly path to $G$
\\ \Return{$\{\textit{Seq}\}$ }
\end{algorithmic}
\end{algorithm}

\section{Results}\label{sec:Results}

\subsection{Local planner performance} \label{subsec:ResultsA}

\begin{figure*}
\setlength{\abovecaptionskip}{0pt}
\centering
\begin{tabular}{cc}
    \multicolumn{2}{c}{ 
    \adjincludegraphics[width=1.8\columnwidth,trim={{0.05\width} {0.16\width} {0.05\width} {0.01\width}},clip] {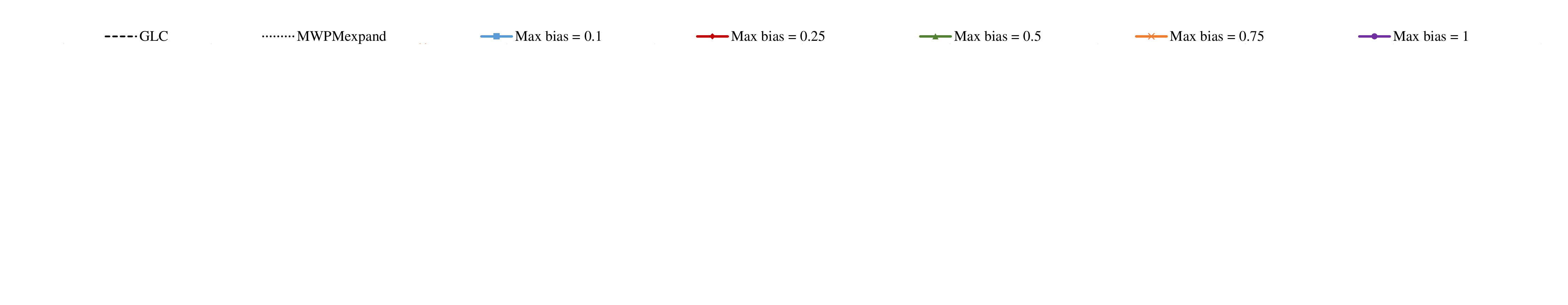}}
    \\
    \adjincludegraphics[width=0.9\columnwidth,trim={{0.03\width} {0\width} {0.09\width} {0.15\width}},clip] {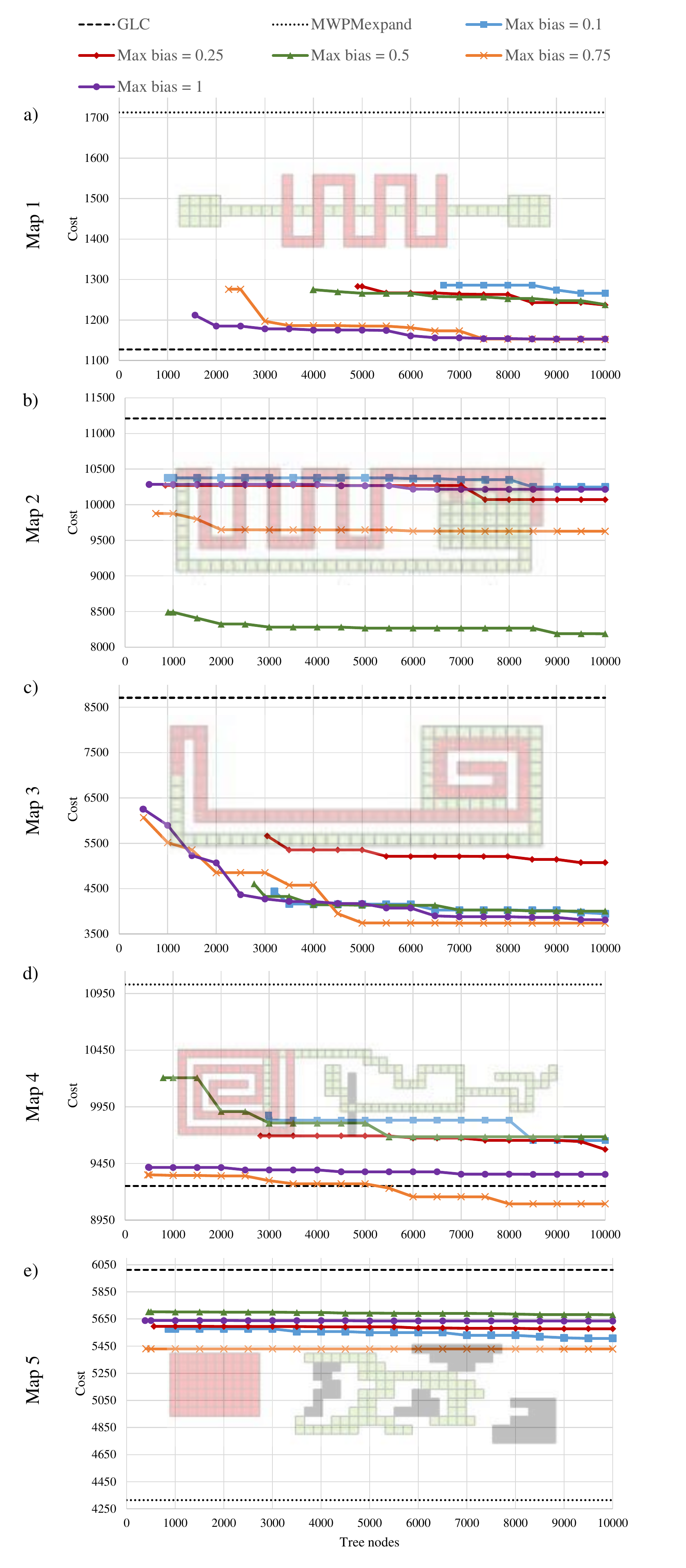}
    & 
    \adjincludegraphics[width=0.9\columnwidth,trim={{0.03\width} {0\width} {0.09\width} {0.15\width}},clip] {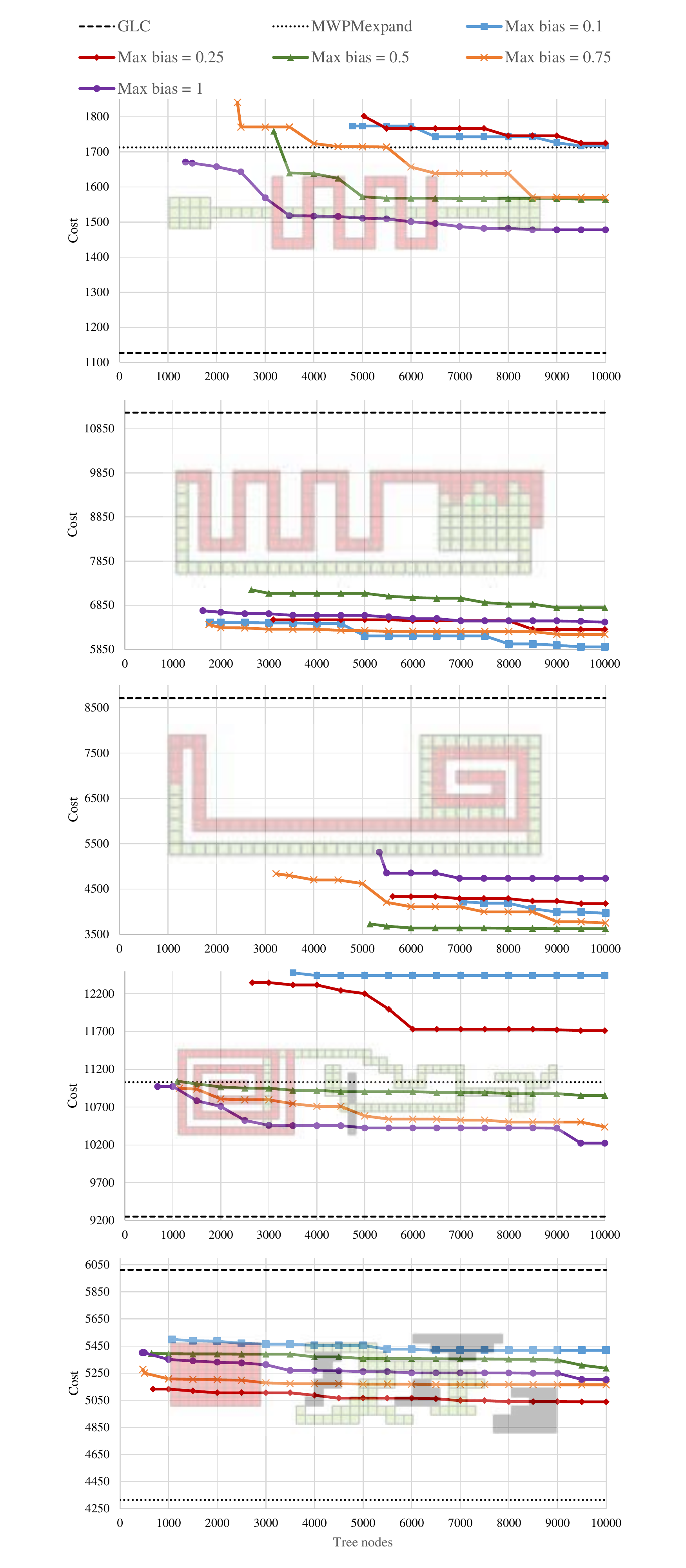}
    \end{tabular}
    \caption{\label{fig:RRT(GLC)Res}
    Comparison of different strategies for five different maps.
    {\sc GLC} is the local planner for the RRT* on the left plots, while {\sc MWPMexpand} is the local planner on the right ones. RRT* was run ten times for each map, for five different $\text{\emph{bias}}_{\textrm{max}}$ values, for each local planner.
    Because {\sc GLC} is a complete planner, it always finds a solution, while {\sc MWPMexpand} gets stuck often. See animation at \url{https://youtu.be/Fp0MUag8po4}.}
\end{figure*}

To evaluate the performance of the RRT* approach to polyomino reconfiguration, we created five maps with different characteristics, as shown in Fig.~\ref{fig:RRT(GLC)Res}.
Map 1 has the start and goal configurations centered.
In maps 2 and 3, the configurations are adjacent but encompass empty space.
The last two maps introduce obstacles and require significant travel from the start to the goal configuration. 

Results are shown in Fig.~\ref{fig:RRT(GLC)Res}.
{\sc GLC} and {\sc MWPMexpand} were used as local planners for the RRT*.  For comparison, both were also used as global planners to find solutions.

For each map, five different values for $\text{\emph{bias}}_{\textrm{max}}$ were tested, as defined in Eq.~\eqref{eq:dyn_bias}. 
$\text{\emph{bias}}_{\textrm{base}}=0.1$ in all cases, so with $\text{\emph{bias}}_{\textrm{max}}=0.1$ no dynamic bias was implemented. 
The tree continued expanding until ten thousand nodes were created.
The {\sc GLC} and {\sc MWPMexpand}  planners are shown with dashed and dotted lines, respectively. 
{\sc GLC} is guaranteed to find a solution (see Theorem~\ref{the:glc-complete}), so it appears in all of the plots. 
{\sc MWPMexpand} can get stuck in local minima, and for maps 2 and 3 it did not find a solution.

An expected observation is that, for higher values of $\text{\emph{bias}}_{\textrm{max}}$, the tree finds a solution faster. The first point in all the plots is the average number of nodes it needed to find a path, as well as the average cost of that initial solution. The nodes to first solution is similar for both the {\sc GLC} and {\sc MWPMexpand} as local planners, with the exception of map 3, for which the tree with {\sc MWPMexpand} took considerably longer.

After ten thousand nodes the lower values for $\text{\emph{bias}}_{\textrm{max}}$ did not provide better solutions, despite prioritizing exploration. For all but map 2 in the {\sc GLC} results, $\text{\emph{bias}}_{\textrm{max}}=0.75$ performs best in terms of time required to find a path and the cost of the solution after ten thousand nodes. In the {\sc MWPMexpand} results, the same $\text{\emph{bias}}_{\textrm{max}}$ value consistently performs better than most of the other values.

Compared to the algorithms as global planners, at least one setting of RRT* outperforms the {\sc GLC} in all but map~1. The {\sc MWPMexpand} performed worse than most RRT* settings for maps 1 and 4. Map 5, the other map for which it found a solution, was the only map where it outperformed the rest of the strategies. In addition, RRT*({\sc MWPMexpand}) performed much worse on maps 1 and 5.

\subsection{Initial solution and multiple dropoffs} \label{subsec:ResultsB}

\begin{figure*}
\setlength{\abovecaptionskip}{0pt}
\centering
\begin{tabular}{c}
    \adjincludegraphics[width=1.8\columnwidth,trim={{0.05\width} {0.165\width} {0.05\width} {0.005\width}},clip] {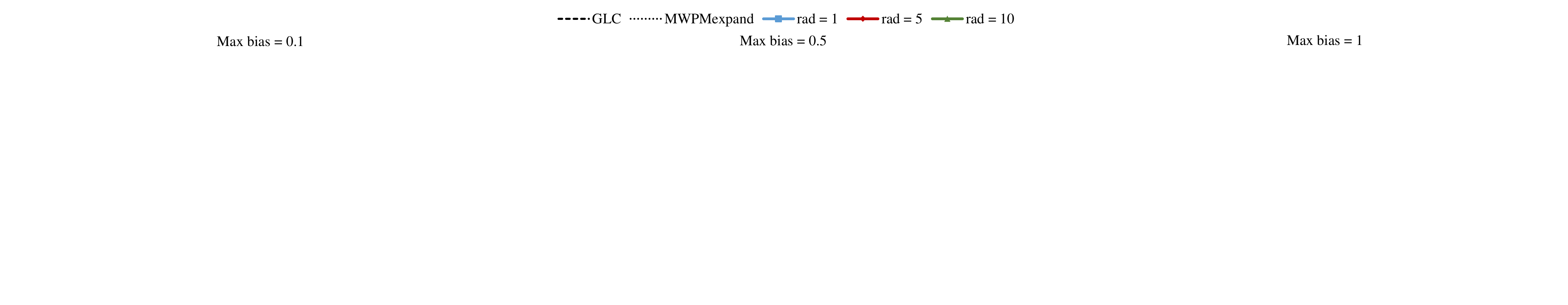} \\
    \adjincludegraphics[width=2\columnwidth,trim={{0\width} {0.015\width} {0\width} {0.016\width}},clip] {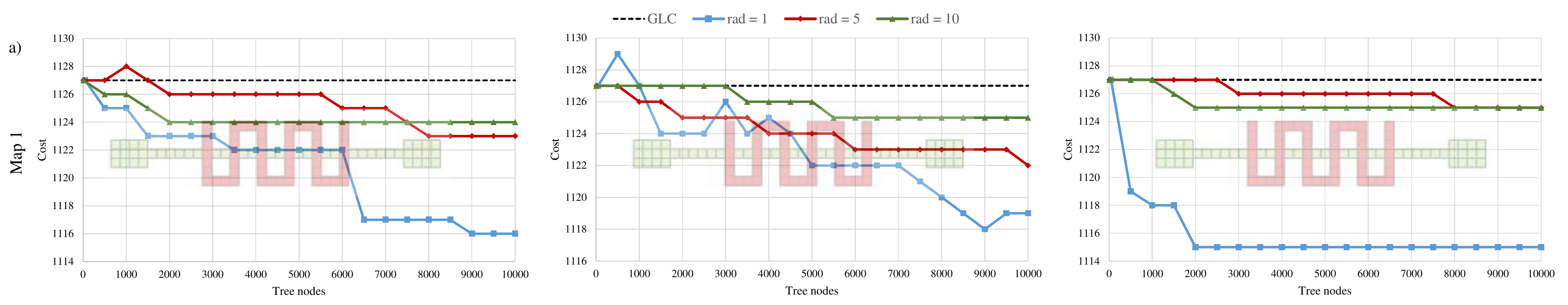} \\ 
    \adjincludegraphics[width=2\columnwidth,trim={{0\width} {0.015\width} {0\width} {0.016\width}},clip] {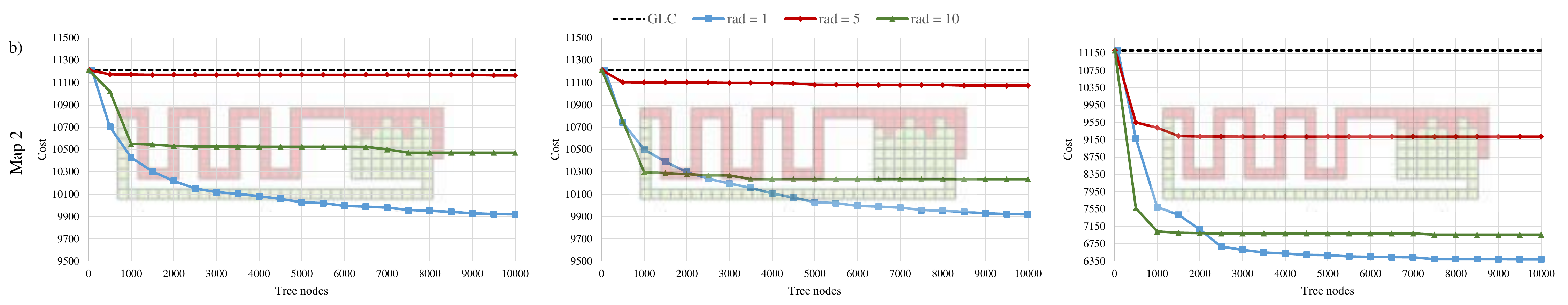} \\ 
    \adjincludegraphics[width=2\columnwidth,trim={{0\width} {0.015\width} {0\width} {0.016\width}},clip] {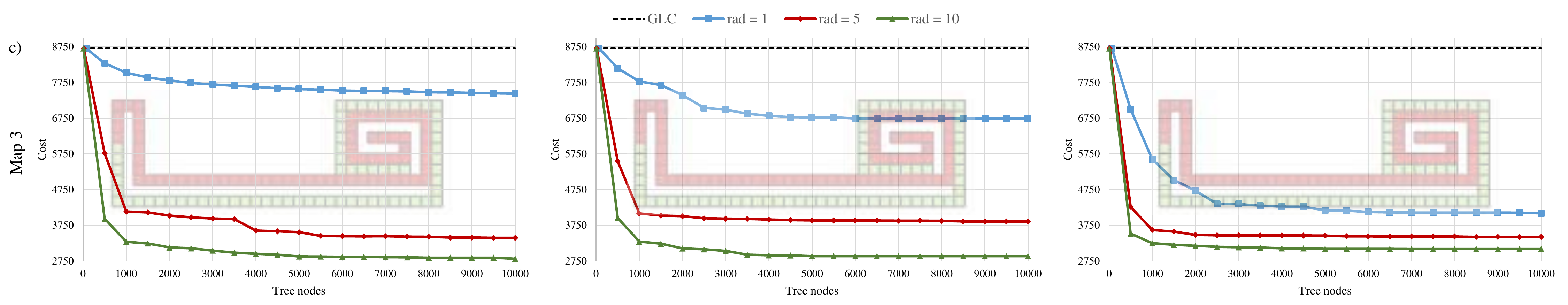} \\ 
    \adjincludegraphics[width=2\columnwidth,trim={{0\width} {0\width} {0\width} {0.016\width}},clip] {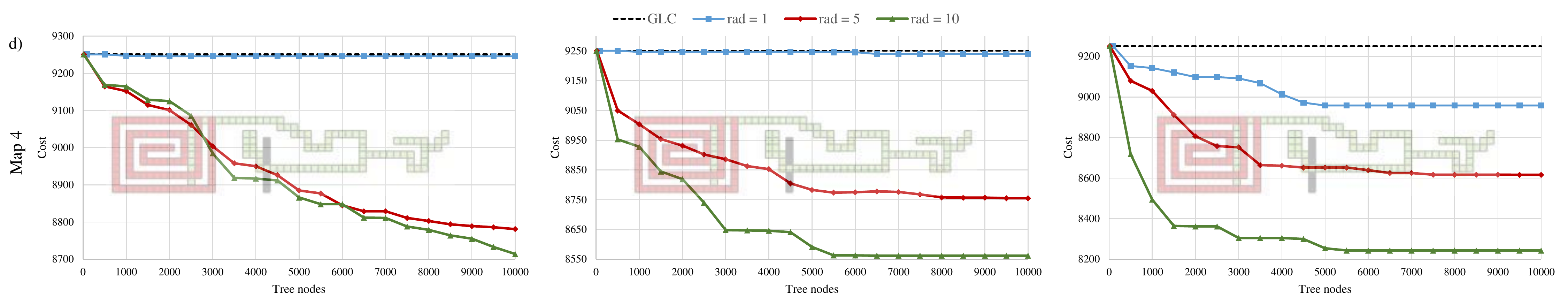} \\
    \adjincludegraphics[width=2\columnwidth,trim={{0\width} {0.015\width} {0\width} {0.016\width}},clip] {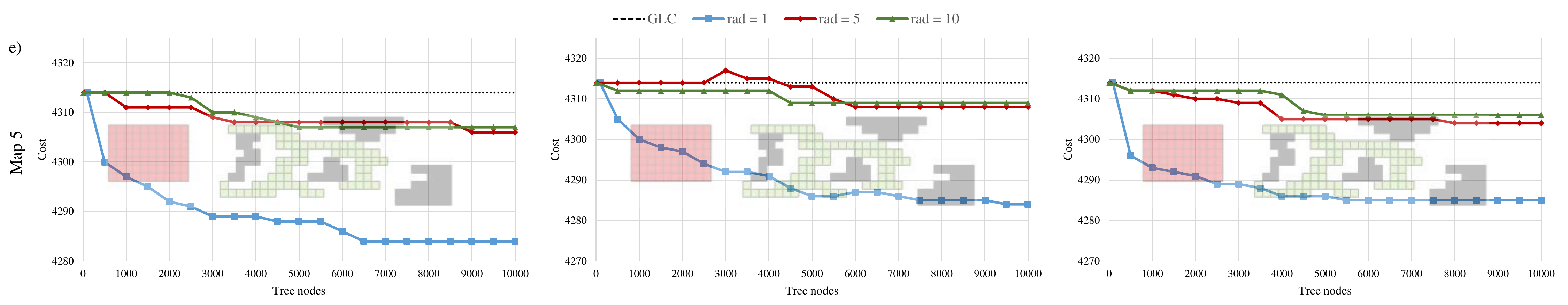}
    \end{tabular}
    \caption{\label{fig:InitSolRes}
    RRT* results using the best initial solution for the five maps. Three different $rad$ values were used to test the effect of multiple dropoffs, with five different $\text{\emph{bias}}_{\textrm{max}}$ values. Here we show the plots for (left) $\text{\emph{bias}}_{\textrm{max}}=0.1$, (middle) $\text{\emph{bias}}_{\textrm{max}}=0.5$ and (right) $\text{\emph{bias}}_{\textrm{max}}=1$. Each point is the average of ten runs.}
\end{figure*}

To increase the probability of RRT* finding a better solution, the tree can be initialized with the best solution between {\sc GLC} and {\sc MWPMexpand} as global planners. Simulations similar to the previous section were carried out with this strategy. Since {\sc MWPMexpand} tends to get stuck often, and based on the results from the previous section, {\sc GLC} is always used as the local planner for RRT*.

Additionally, the effect of multiple dropoffs between nodes is investigated. Nodes are added to the tree every $rad$ number of dropoffs, unless the configurations being expanded toward are reached. Rewiring can also result in fewer dropoffs.

The results are shown in Fig.~\ref{fig:InitSolRes}. For all but map 5, {\sc GLC} found the best/only solution as a global planner. For maps 1, 4, and 5, the RRT* was able to improve the initial solution. Before, only one setting of RRT* performed slightly better for map 4. The higher $rad$ values resulted in improved solutions for map 3.

For maps 3 and 4, the higher $rad$ values provided better performance. However, for maps 1, 2, and 5, the trend was the opposite. In particular, the RRT* did not perform as well on map 2 as when an initial solution is not considered. Regardless of the results, one downside of higher $rad$ values is that node creation takes longer, because every intermediate configuration is checked for potential rewiring.

Sometimes the RRT* can worsen the initial solution, as seen in maps 1 and 5.
The reason is that when a node is rewired, the position of the robot changes, which can negatively affect the cost of subsequent nodes.
To avoid increasing the time complexity of the algorithm, only the costs of immediate children nodes are considered.
A deeper comparison, or duplicating nodes when they both lower and raise costs of children nodes, could be implemented to eliminate this issue.

\begin{figure}[!]
\setlength{\abovecaptionskip}{0pt}
\centering
\begin{tabular}{cc}
    30\% obstacle space & 70\% obstacle space \\ 
    \adjincludegraphics[width=0.45\columnwidth,trim={{0.275\width} {0.42\width} {0.275\width} {0.425\width}},clip] {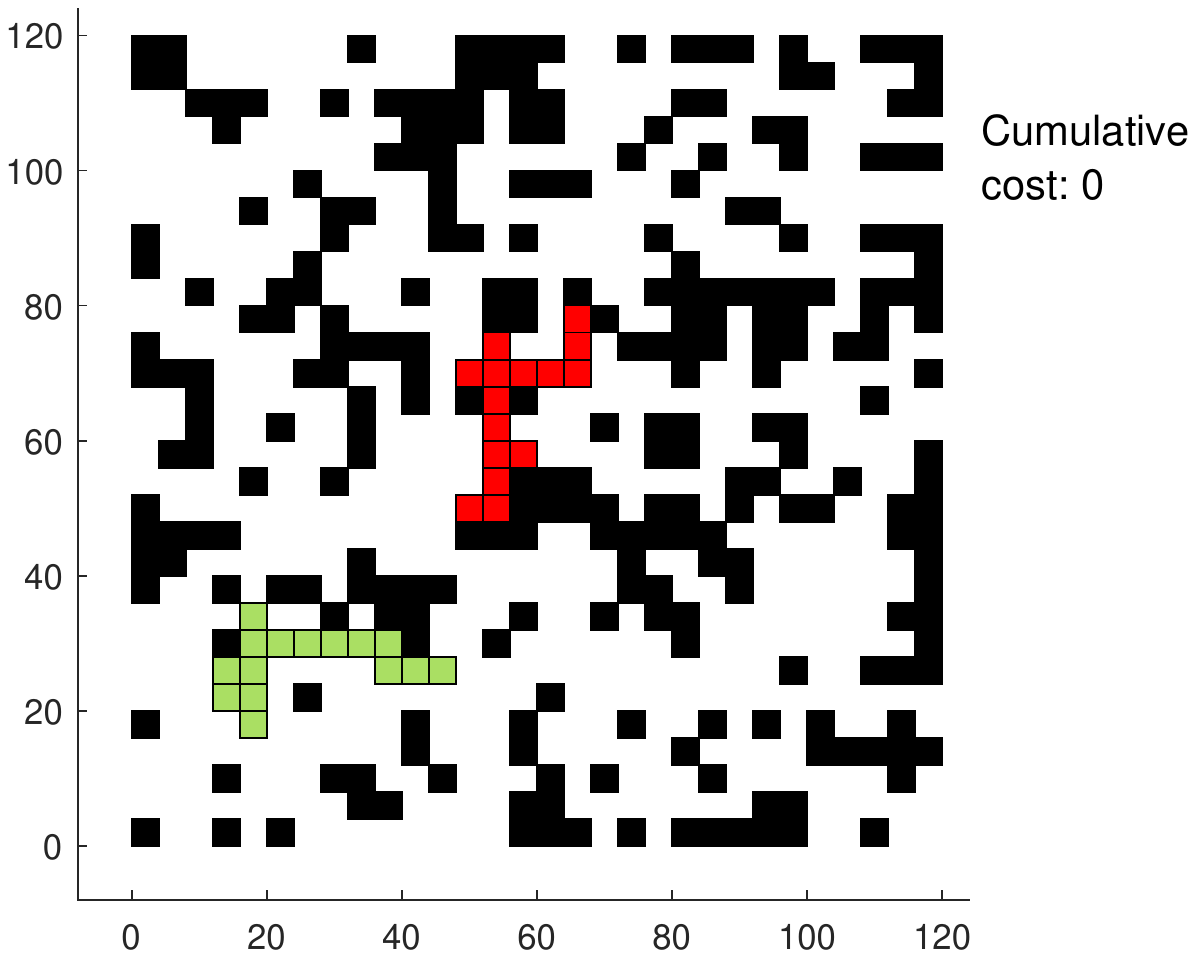} & 
    \adjincludegraphics[width=0.45\columnwidth,trim={{0.275\width} {0.42\width} {0.275\width} {0.425\width}},clip] {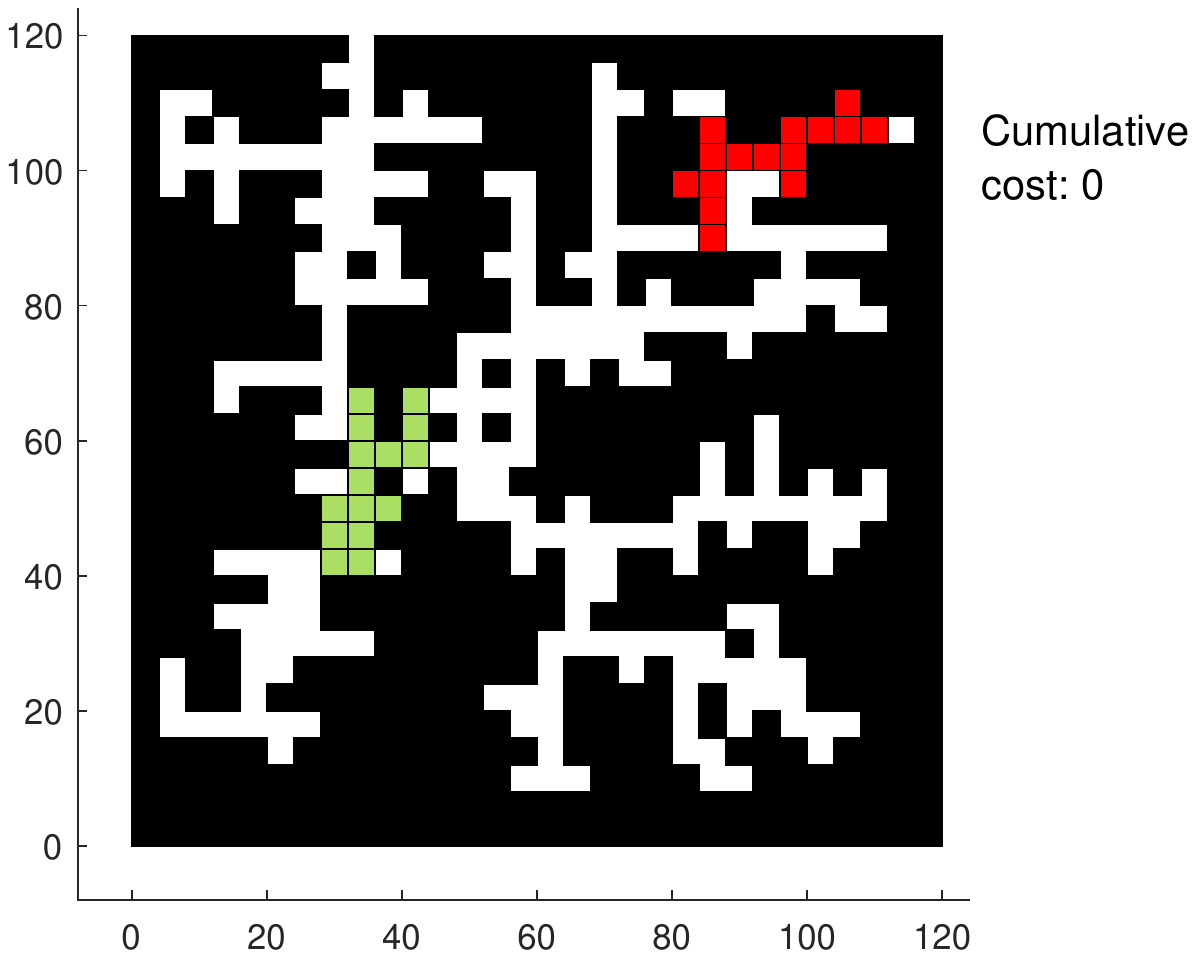}
\end{tabular}
\caption{\label{fig:ObsPerEx}
Randomly generated maps for two percentages of obstacle space. The red polyomino is the start configuration, and the green polyomino is the goal. In the maps used for testing, the workspace has a size of $30 \times 30$ tiles, and the polyominoes are composed of $n=15$ tiles.}
\end{figure}

\subsection{Percentage of the workspace that are obstacles}


We continue to test how the RRT* performs as more obstacles are present in the workspace.
For each of five different obstacle densities, we created ten random maps (50~maps total).
See Fig.~\ref{fig:ObsPerEx} for examples at different densities.

First, both {\sc GLC} and {\sc MWPMexpand} were used as local planners.
Based on the results from Sec.~\ref{subsec:ResultsA}, $\text{\emph{bias}}_{\textrm{max}}$ was set to $0.75$, and $rad$ to $1$.
The algorithm was configured to stop after ten thousand nodes were created, or a time limit exceeded (required for higher percentage obstacle maps).

The performance of the RRT* is summarized in Table~\ref{tab:DiffObsRes1}.
\begin{table}[h]
\caption{Comparison of planning strategies}
\begin{center}
\begin{tabular}{|c|ccc|c|}
\hline
&  \multicolumn{3}{c|}{ method returning best solution} &  {\sc MWPM} \\
\makecell{obstacle \\ percent \\(\%) }&\makecell{RRT*({\sc GLC}) }&\makecell{RRT* \\ ({\sc MWPM} \\
{\sc expand}) }&\makecell{{\sc GLC} }&\makecell{ {\sc expand} \\ finds a \\ solution} \\
\hline
\makecell{10\% } & \makecell{50\% } & \makecell{30\% } & \makecell{20\% } & \makecell{10\% }\\
\hline
\makecell{30\% } & \makecell{50\% } & \makecell{40\% } & \makecell{10\% } & \makecell{10\% }\\
\hline
\makecell{50\% } & \makecell{60\% } & \makecell{40\% } & \makecell{0\% } & \makecell{30\% }\\
\hline
\makecell{70\% } & \makecell{50\% } & \makecell{50\% } & \makecell{0\% } & \makecell{10\% }\\
\hline
\makecell{90\% } & \makecell{40\% } & \makecell{40\% } & \makecell{20\% } & \makecell{90\% }\\
\hline
\end{tabular}
\label{tab:DiffObsRes1}
\end{center}
\end{table}
It returned the best solution for most maps, regardless of the density of obstacles, and RRT*({\sc GLC}) slightly outperformed RRT*({\sc MWPMexpand}) for the $10\%, 30\%$, and $50\%$ obstacle maps. 
Additionally, RRT*({\sc MWPMexpand}) returned solutions with much higher costs than RRT*({\sc GLC}) in many cases, showing that {\sc GLC} is more robust as a local planner. 
{\sc MWPMexpand} never returned the least costly path, offering a low success rate for all but the $90\%$ obstacle maps. 
At~maximal obstacle density, there is little space for the tree to explore and the possibility of additional paths to the goal is minimal, so the four strategies returned paths with very similar costs.

Secondly, the RRT* was initialized with the best solution as in Sec.~\ref{subsec:ResultsB}.
Here, {\sc GLC} was always used as the local planner, and $\text{\emph{bias}}_{\textrm{max}}$ was kept at $0.75$.
The corresponding results are summarized in Table~\ref{tab:DiffObsRes2}.
Once again, {\sc GLC} returned the best initial solution the majority of the time.
For $rad=1$ and $rad=5$ the RRT* was able to improve the initial solution for more than half of the maps for most percentages.
Conversely, $rad=10$ did not perform well, suggesting that the increased exploration rate was not very helpful in maps with high density of randomly generated obstacles.

\begin{table}[h]
\caption{Comparison of RRT* with best initial solution}
\begin{center}
\begin{tabular}{|c|cc|ccc|}
\hline
&  \multicolumn{2}{c|}{best initial solution} &  \multicolumn{3}{c|}{improves initial solution} \\
\makecell{obstacle \\ percent \\(\%) }&\makecell{{\sc GLC} }&\makecell{{\sc MWPM} \\
{\sc expand}}&\makecell{RRT*\\ ({\sc GLC}) \\ $rad=1$}&\makecell{RRT* \\ ({\sc GLC}) \\ $rad=5$}&\makecell{RRT* \\ ({\sc GLC}) \\ $rad=10$} \\
\hline
\makecell{10\% } & \!\!\makecell{100\% } & \makecell{\phantom{9}0\% } & \makecell{40\% } & \makecell{60\% } & \makecell{10\% }\\
\hline
\makecell{30\% } & \makecell{90\% } & \makecell{10\% } & \makecell{60\% } & \makecell{50\% } & \makecell{10\% }\\
\hline
\makecell{50\% } & \makecell{90\% } & \makecell{10\% } & \makecell{50\% } & \makecell{50\% } & \makecell{20\% }\\
\hline
\makecell{70\% } & \makecell{90\% } & \makecell{10\% } & \makecell{70\% } & \makecell{60\% } & \makecell{20\% }\\
\hline
\makecell{90\% } & \makecell{70\% } & \makecell{30\% } & \makecell{60\% } & \makecell{10\% } & \makecell{30\% }\\
\hline
\end{tabular}
\label{tab:DiffObsRes2}
\end{center}
\end{table}

\section{Conclusions and Future Work}\label{sec:Conclusion}
This paper presented two local planners, {\sc GLC} and {\sc MWPMexpand}, and an RRT* implementation using these planners to optimize reconfiguration sequences for a set of tiles in complex environments.
The planners and RRT* were tested in obstacle-free and obstacle-filled environments. 
The results show that the RRT* often finds sequences of lower cost, and that {\sc GLC} is a more robust planner than {\sc MWPMexpand}.

Future work should study the complexity class of the reconfiguration, additional techniques to enhance the RRT* such as multi-query trees (e.g., for a robotic swarm) and consider more accurate distance heuristics for finding nearest neighbors. Additionally, the algorithms should be extended to work for 3D reconfiguration.

\bibliographystyle{IEEEtranDOI}
\bibliography{biblio.bib}
\end{document}